\newtheorem{theorem}{Theorem}[section]
\newtheorem{lemma}[theorem]{Lemma}
\newtheorem{definition}[theorem]{Definition}
\newtheorem{proposition}[theorem]{Proposition}
\newtheorem{corol}[theorem]{Corollary}
\newtheorem{example}[theorem]{Example}
\newtheorem{remark}[theorem]{Remark}
\newcommand{\C}{{\mathcal C}}
\newcommand{\M}{{\mathcal M e  a  n i n g}}
\newcommand{\decode}{\rhd}
\newcommand{\code}{\lhd}
\newcommand{\dc}{{\lhd\hspace{-0.148em}\rhd}}
\begin{document}

\author{Peter Hines}
%\address{Peter Hines \\ Dept. of Computer Science, \\ University of York, \\ York, U.K. \\ YO10 5DD } 
 
 \title{Types and forgetfulness in categorical linguistics and quantum mechanics}

\maketitle
\begin{abstract}
{\em %In this chapter, t
The role of types in categorical models of meaning is investigated. A general scheme for how typed models of meaning may be used to compare sentences, regardless of their grammatical structure  %-- based on various other chapters in this book -- 
is described, and a toy example is used as an illustration. Taking as a starting point the question of whether the evaluation of such a type system `loses information', we consider the parametrized typing associated with connectives from this viewpoint.  

The answer to this question implies that, within full categorical models of meaning, the objects associated with types must exhibit a simple but subtle categorical property known as self-similarity. We investigate the category theory behind this, with explicit reference to typed systems, and their monoidal closed structure.  We then demonstrate close connections between such self-similar structures and dagger Frobenius algebras. In particular, we demonstrate that the categorical structures implied by the polymorphically typed connectives give rise to a (lax unitless) form of the special forms of Frobenius algebras known as classical structures, used heavily in abstract categorical approaches to quantum mechanics.
}
\end{abstract}

\section{Introduction}\label{intro}
A recent trend in linguistics \cite{CSC} %, also outlined in several chapters in this book, 
is to  extend linguistic models of meaning from {\em words} to {\em sentences}. Either implicitly or explicitly, this is done via a type system -- based on a categorical grammar -- equipped with a notion of {\em evaluation}. This notion of evaluation is crucial, in that it is used to reduce all grammatically correct sentences to the same type, where they may be compared and their similarity evaluated. In this chapter, we describe this typing and evaluation process in an abstract categorical setting, based around a toy example.

We then consider the deceptively simple question of whether this evaluation process (mapping all grammatically correct sentences to entities of the same type) is reversible or not -- i.e. does evaluation lose information? Our conclusion is that in general,  forgetting information is an inevitable and crucial part of this process. However, we also demonstrate that connectives are a special case, having an entirely reversible interpretation.  Following this observation to its inevitable mathematical conclusion, we discover a connection between reversibility and polymorphic typing, in both the linguistic and logical sense. 

The relevant structures are familiar from a wide range of settings, ranging from models of lambda calculus and the Geometry of Interaction, to fractals, tilings, and the Thompson groups. This chapter demonstrates a further close connection with abstract categorical models of quantum mechanics. Precisely, we derive a (lax, infinitary)   form of the special sort of Frobenius algebras known as  {\em classical structures}, around which categorical approaches to quantum information and computation are based.

\section{Introducing typing to models of meaning}
The method of comparing meaning of words known as distributional semantics is well-known%described elsewhere in this volume (see Chapter 6), 
and as such, we restrict our description to the features that will be particularly relevant to the typing process. We then give a simple example of how typing, along with an evaluation operation, is used to allow the comparison of quantities in physics. This is followed by a formal description of what we mean by a typed system, based around the theory of monoidal closed categories, and an indication of how we expect such a categorical typing in models of meaning to allow us to compare arbitrary sentences, regardless of their grammatical structure.

%\subsection{Relevant features of vector-based distributional semantics}
As described in Chapter 6, distributional semantics provides a method of associating a vector (the {\em meaning vector}) with each word in a dictionary, based on its usage in some corpus. Vectors may then be compared with each other, using any of the familiar tools from linear algebra (generally, the scalar product), giving a measure of the similarity, or overlap between words. The simple but ambitious aim is to extend this to extend this process to sentences, rather than single words, using the following scheme:
\begin{enumerate}
\item Single words are assigned {\em types}, based on their role; this typing is extended to sentences, which are typed by their grammatical structure. 
\item Associated with the type system is an evaluation, or reduction, process that reduces all grammatically correct sentences to elements of the same type (this is, as described elsewhere, a common approach in categorical linguistics).
\item Crucially, elements of the same type can be compared, providing a method of comparing the meaning of grammatically distinct sentences, in a similar way to distributional semantics.
\end{enumerate}
It hardly needs emphasising that this is a very ambitious program; instead of aiming to provide a complete or partial solution, this chapter describes features that such a  model of meaning necessarily requires, at the level of the types.

\section{What is a type?}\label{typeexample}
To a categorical logician, the answer is straightforward:  {\em a type is an object in a (monoidal, closed) category}. To explain this, we first give a simple example of typing in basic physics, followed by the formal definition,  and an illustration of why such a type system would also be useful in linguistic models of meaning.

\subsection{Types in elementary physics}
A simple, but illustrative, example of a typed system comes from basic physics, where the {\em units of measurement} may be thought of as the {\em types of quantities}. The familiar seven basic SI units (kilogram ($\bf kg$), second ($\bf s$), metre ($\bf m$), lumen ($\bf lm$), \&c.) are the fundamental types, and further types may be built up recursively, using these base types and two operations known as {\em pairing} and {\em abstraction}\footnote{Pairing and abstraction are more commonly called {\em product} and {\em quotient}. We avoid this terminology since these are neither products nor quotients in the categorical sense.}:
\begin{itemize}
\item {\bf (Pairing)} Given two types $S,T$, the pair type $ST$ may be formed. For example {\em luminous energy} is measured in {\em lumen seconds}, and hence has type ${\bf lm \ s}$.
\item {\bf (Abstraction)} Given two types $L,M$, the abstraction type $ML^{-1}$ may be formed. For example {\em velocity} is given in metres per second, and hence has type ${\bf m} \ {\bf s}^{-1}$. 
\end{itemize}
Associated with such a type system is a notion of {\em evaluation} or {\em reduction}.  A quantity of type $YX^{-1}$ may be combined with a quantity of type $X$ to return a quantity of type $Y$.  For example, let us calculate how far light, with a velocity of $c=2.997\times 10^8 {\bf ms}^{-1}$, travels in $1.3{\bf s}$. 
\begin{equation}
(2.998\times 10^8){\bf ms}^{-1}\ \times \  1.3{\bf s} \ \ = \ \ 3.897\times 10^8{\bf m}
\end{equation}
Considering the typing only, we see a reduction of the form 
\begin{equation}\label{earthtomoon}
{\bf ms}^{-1} \ \times \ {\bf s} \ \ \ \stackrel{Eval.}{\longrightarrow} \ \ {\bf m} 
\end{equation}
In this case, evaluation is simply the operation of {\em multiplication}\footnote{In general,  evaluation in a typed system may be significantly more complex; theoretical computer scientists will be familiar with evaluation as either $\beta$-reduction in lambda calculus, or the execution of a Turing machine \cite{LS,PH03}}. Thus, we observe that the type system for SI units is in fact {\em commutative} (i.e. the type $XY$ is identical to the type $YX$). As a simple consequence of this, ordering is irrelevant, and (for example) ${\bf ms}^{-1}$ is equivalent to ${\bf s}^{-1}{\bf m}$. 
In general, and in categorical linguistics in particular, neither commutativity  nor symmetry (i.e. commutativity up to isomorphism) may be assumed. To avoid ambiguity, we will therefore use type-theoretic notation, and write either $[S\rightarrow T]$ or $[T \leftarrow S]$ instead of $TS^{-1}$. Strictly, this means that we should consider two distinct evaluation operations; however, the required evaluation is often clear from the context, so for simplicity of notation we do not distinguish between the two, unless absolutely essential.

\subsection{How we wish to use types in models of meaning}\label{theaim}
By analogy with how types are used in the above simple example, we wish to consider models of meaning where words and phrases are typed according to their grammatical structure, and the evaluation operation associated with the type system reduces all (grammatically correct) sentences to the same type --- the {\em sentence type} $ S$.

Consider the simplest possible sentence structure:  
\begin{center} ({\em  Noun Phrase}\ \ , \ \  {\em  Intransitive Verb}) \end{center}
If we assume that the {\em noun phase} is of some primitive type $ NP$, an intransitive verb can only have type $[{NP} \rightarrow { S}]$, where $S$ is the sentence type. The reduction of the sentence to the type $S$ then proceeds by direct analogy with Equation \ref{earthtomoon}:

\begin{equation}\label{noun-verb}
{NP} \ \times \ {[NP\rightarrow S]} \ \ \ \stackrel{Eval.}{\longrightarrow} \ \ {S} 
\end{equation}

\subsection{Monoidal closed categories}
The above notions may be formalised in the field of category theory. We refer to Chapter 1 for the basic notions of (monoidal) category theory; however, we will be forced to take a more formal approach, and explicitly consider the structural isomorphisms:
\begin{definition}{\em Symmetric monoidal categories}\\
A {\bf monoidal category} is defined to be a category $\C$, together with a functor $\otimes : {\C}\times {\C}\rightarrow {\C}$ that satisfies, for all $A,B,C\in Ob({\C})$:
\begin{itemize}
\item {\bf Unit objects} There exists $I\in Ob({\C})$ satisfying $I\otimes A\ \cong \ A \ \cong \ A \otimes I$.
\item {\bf Associativity} 
$A\otimes (B\otimes C) \ \cong \ (A\otimes B)\otimes C$.
\end{itemize}
If a monoidal category satisfies the additional condition 
\begin{itemize}
\item {\bf symmetry}  
$A\otimes B \ \cong \ B \otimes A$.\end{itemize}
it is called a {\bf symmetric monoidal category}.
The above isomorphisms  exhibiting associativity or symmetry are  {\em natural}, and satisfy various {\em coherence  conditions} laid out in \cite{MCL}. 
\end{definition}
Due to MacLane's celebrated coherence theorem for associativity, we may treat the associativity isomorphisms as though they are strict identities ---  so we not distinguish between $A\otimes (B\otimes C)$  and $(A\otimes B)\otimes C$. We follow this practice until Section \ref{catselfsim}, where the distinction between the two will become important.

\begin{definition}\label{monoidalclosed}{\em Monoidal closed categories}\\
Let $({\C},\otimes)$ be a monoidal category. We say that it is {\bf monoidal closed} when there exists a functor 
\[ [ \ \underline{\ \ }\rightarrow  \underline{\ \ }\ ] : {\C}^{op} \times {\C} \rightarrow {\C} \]
called the {\bf internal hom} functor,
such that for fixed $B\in Ob({\C})$, the functors given by 
\[ [B \rightarrow  \underline{ \ \ }\  ]  : {\C} \rightarrow {\C}  \ \ \mbox{ and }\ \  \underline{\ \ } \otimes B : {\C}\rightarrow {\C} \]
form an adjoint pair. Equivalently, for all $X,Y,Z\in Ob(\C)$, there exists a natural isomorphism
\[ \C(X\otimes Y,Z) \ \cong \ \C(X,[Y\rightarrow Z]) \]
\end{definition}
The above definition is concise, albeit very abstract (for example, we refer to \cite{MCL} for the definition of an adjoint pair of functors). Instead we use the following characterisation that makes the existence and role of an evaluation map central:
\begin{theorem}\label{closurecurrying}
The above definition of a monoidal closed category is equivalent to the following: \\

 For every pair of objects $A,B\in Ob({\C})$, there exists 
\begin{itemize} 
\item an object $[A\rightarrow B]\in Ob(\C)$, 
\item an arrow $Eval _ {A,B} \in {\mathcal  C}( A\otimes [A\rightarrow B] , B)$
\end{itemize}
where, for all $f:A\otimes X\rightarrow B$, there exists unique $g \in {\C}(X ,[A\rightarrow B] )$ such that the following diagram commutes:
\[ \xymatrix{
A\otimes X \ar[r]^f \ar[dr] _{1_A\otimes g}  & B \\
		& A\otimes [A\rightarrow B] \ar[u]_{Eval_{A,B}}
} \]
\end{theorem}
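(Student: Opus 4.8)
The plan is to recognise this as the familiar \emph{universal-arrow} characterisation of an adjunction: the map $Eval_{A,B}$ is nothing but the counit of the adjunction $(A\otimes-)\dashv[A\to-]$ implicit in Definition \ref{monoidalclosed}, and the claimed universal property is exactly the statement that this counit is couniversal from the functor $A\otimes-$ to the object $B$. I would therefore prove the two implications separately, treating the natural isomorphism $\Phi_X:\C(A\otimes X,B)\xrightarrow{\;\cong\;}\C(X,[A\to B])$ as the given datum in one direction, and the pair $\bigl([A\to B],Eval_{A,B}\bigr)$ as the given datum in the other.

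For the forward direction (Definition \ref{monoidalclosed} $\Rightarrow$ the theorem) I would \emph{define} the evaluation map by transporting the identity across the isomorphism, $Eval_{A,B}:=\Phi_{[A\to B]}^{-1}(1_{[A\to B]})$, so that $\Phi_{[A\to B]}(Eval_{A,B})=1_{[A\to B]}$. Given any $f:A\otimes X\to B$, the candidate arrow is $g:=\Phi_X(f)$, and the commuting triangle $Eval_{A,B}\circ(1_A\otimes g)=f$ would follow from a single application of naturality of $\Phi$ in the variable $X$ --- the naturality square evaluated at the morphism $g$ --- together with the injectivity of the bijection $\Phi_X$. Uniqueness of $g$ is then immediate from the same computation, since any competing $g'$ making the triangle commute must satisfy $g'=\Phi_X(f)$.

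For the converse (the theorem $\Rightarrow$ Definition \ref{monoidalclosed}) I would use the universal property to manufacture the bijection directly: set $\Phi_X(f)$ to be the unique $g$ guaranteed by the statement, with candidate inverse $\Psi_X(g):=Eval_{A,B}\circ(1_A\otimes g)$. That $\Phi_X$ and $\Psi_X$ are mutually inverse is precisely the existence and uniqueness clauses of the universal property. It then remains to upgrade these pointwise bijections to a \emph{natural} isomorphism: besides naturality in $X$, one must exhibit $[-\to-]$ as a functor $\C^{op}\times\C\to\C$ --- the object assignment is given, but the arrow assignment has to be reconstructed from the universal property --- and then verify naturality in $A$ and $B$.

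I expect this last point --- promoting the object-level data to full functoriality of the internal hom and checking naturality in every argument --- to be the main obstacle, not because it is conceptually deep but because it is the one genuinely load-bearing piece of bookkeeping; everything else is the standard counit-is-universal dictionary of \cite{MCL}. A secondary subtlety worth flagging is the tensoring convention: the theorem tensors $A$ on the left ($A\otimes X$), whereas Definition \ref{monoidalclosed} is phrased with $-\otimes B$, so I would either invoke the symmetry isomorphism to reconcile the two or simply fix throughout the convention that $[A\to-]$ is right adjoint to $A\otimes-$.
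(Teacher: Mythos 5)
Your proposal is correct and is exactly the proof the paper intends: the paper gives no argument of its own, deferring to \cite{MCL,LS}, and the standard proof found there is precisely your counit/universal-arrow dictionary --- defining $Eval_{A,B}$ as the image of the identity under the adjunction bijection, deriving existence and uniqueness of $g$ from one naturality square, and, in the converse direction, reconstructing functoriality of $[\,\_\rightarrow\_\,]$ from the universal property. Your closing remark about the tensoring mismatch (the theorem uses $A\otimes X$ while Definition \ref{monoidalclosed} is phrased with $\_\otimes B$, which differ in a non-symmetric setting) correctly identifies an imprecision the paper itself elides with its comment about ``two distinct evaluation operations,'' and either of your proposed fixes resolves it.
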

\begin{proof} Proofs may be found in any text on category theory or categorical logic (e.g. \cite{MCL,LS}).
$\Box$
\end{proof}

\subsection{Monoidal closed categories as type systems}
The connection between the theory of monoidal closed categories, and the (very elementary) type system presented in Section \ref{typeexample} should then be straightforward. More generally, we take a categorical perspective, and {\em define} a type as an object in a monoidal closed category. The operation of pairing from Section  \ref{typeexample} is then simply the monoidal tensor $\underline{\ \ } \otimes \underline{\ \ }$, and the operation of abstraction from the same section is the internal hom functor $[ \underline{ \ \ } \rightarrow \underline{ \ \ }]$. 
Finally, the reduction operation is simply the evaluation derived in Theorem \ref{closurecurrying}. 

The question then arises: in this setting, what is an {\em quantity} of a certain type, and how may such quantities be compared?
\subsection{Elements, scalars, daggers and duals}
%Although the above formal definitions make the connection between {\em types} and {\em monoidal closed categories}  apparent, the question remains: what are {\em elements} of a certain type, and how may we compare them? 
The objects of a monoidal closed category do not come equipped with a notion of membership, so it it not accurate to talk about ` a member $x$ of some object $N$'. Instead we have the notion of {\em elements} of an object. 
\begin{definition}\label{wellpointed}
Given a monoidal category $(\C,\otimes ,I)$, an {\bf element} of some object $N\in Ob({\C})$ is a member of $\C(I,N)$ i.e. an arrow from $I$ to $N$. The category $(\C, \otimes , I)$ is called {\bf well-pointed} when, for all $f\neq g\in\C(X,Y)$, there  exists some element $a\in\C(I,X)$ such that $fa\neq ga\in \C(I,Y)$. 
\end{definition}
For well-pointed categories, it is easy to see how the notion of elements is a reasonable replacement for the notion of membership. Most of this chapter is based on {\em elements} of a category, and their interaction with the monoidal structure, and the categories with which we work are generally {\em well-pointed}. We will point out when results depend on this assumption, or when we are (unusually) referring to a non well-pointed category.\\

\noindent
In order to {\em compare} elements of an object, we need a small amount of extra structure:
\begin{definition}
A {\bf dagger} operation on a category $\C$ is a (contravariant) involutive endofunctor, usually written $(\ )^\dagger:\C^{op}\rightarrow \C$ that is the identity on objects, so $A^\dagger=A$ for all $A\in Ob(\C)$.  An arrow $f\in \C(A,B)$ satisfying $f^\dagger f=1_A$ is called an {\bf isometry}, and when this is a two-sided inverse (so $f^\dagger$ is also an isometry), then $f$ is called {\bf unitary}.  

Let $\_ \otimes \_$ be a monoidal tensor on $\C$. 
When the monoidal structure has a well-behaved interaction with the dagger operation (that is, all canonical isomorphisms are unitary), then $(\mathcal C, \otimes ,(\ ){^\dagger})$ is called a {\bf dagger monoidal category}.
\end{definition}

Dagger monoidal categories provide us with exactly the structure we need to compare elements:
\begin{definition}
Following \cite{SAAS}, arrows from $I$ to itself in a monoidal category with daggers are called {\bf abstract scalars}. Given two elements of the same object $x,y\in \C(I,X)$, their {\bf generalised inner product} is the endomorphism of the unit object given by
\[ \langle x | y \rangle \ = \ x^\dagger y \in \C(I,I) \]
\end{definition}

Thus generalised inner products act as {\em comparisons},  give a result that is an arrow from the unit object to itself. This fits in well with our usual intuition of what it means to compare the similarity of elements, in that in various settings $\C(I,I)|$ is (for example) the real line $\mathbb R$, the complex plane $\mathbb C$, the natural numbers $\mathbb N$, the unit interval $[0,1]$, etc. We take care to avoid using categories where the endomorphism monoid of the unit object is trivial (e.g. globally defined functions, relations on sets, vector spaces with direct sum as monoidal tensor, etc.).

\begin{proposition}\label{abstractScalars}
Let $(\C,\otimes,I)$ be a monoidal category.
\begin{enumerate}
\item $\C(I,I)$ is an abelian monoid.
\item Up to canonical isomorphism, $\alpha \otimes \beta = \alpha \beta = \beta \alpha$.
\item \label{scalar_tensor} When $\C$ is a dagger monoidal category, then for all $X,Y\in Ob(\C)$ and elements 
\[ a,b\in \C(I,X) \ \ , \ \ c,d\in \C(I,Y) \]
then 
\[ \langle a\otimes c | b\otimes d\rangle \ = \ \langle a|b\rangle \ \langle c | d \rangle \]
\end{enumerate}
\end{proposition}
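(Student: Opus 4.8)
The plan is to handle (1) and (2) together by the classical Eckmann--Hilton argument (in the form due to Kelly and Laplaza), and then to obtain (3) as a short consequence of (2) together with the compatibility of the dagger with the tensor. Write $\lambda_A : I\otimes A \to A$ and $\rho_A : A\otimes I \to A$ for the left and right unit isomorphisms; by the coherence theorem these coincide on $I\otimes I$, so I set $u := \lambda_I = \rho_I : I\otimes I \to I$, which is a (unitary) isomorphism.

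Since $\C(I,I)$ is automatically a monoid under composition with unit $1_I$, the only real content of (1) is commutativity, and this will fall out of the proof of (2). For (2), I would start from the two factorisations of $\alpha\otimes\beta$ supplied by the interchange law for $\otimes$,
\[ \alpha\otimes\beta \;=\; (\alpha\otimes 1_I)(1_I\otimes\beta) \;=\; (1_I\otimes\beta)(\alpha\otimes 1_I), \]
then compose on the left with $u$ and apply naturality of $\rho$ (giving $u(\alpha\otimes 1_I)=\alpha u$) and of $\lambda$ (giving $u(1_I\otimes\beta)=\beta u$). The first factorisation then yields $u(\alpha\otimes\beta)=\alpha\beta\,u$ and the second yields $u(\alpha\otimes\beta)=\beta\alpha\,u$. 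Cancelling the isomorphism $u$ gives $\alpha\beta=\beta\alpha$, which is commutativity and hence completes (1); conjugating instead gives $u(\alpha\otimes\beta)u^{-1}=\alpha\beta$, i.e. $\alpha\otimes\beta=\alpha\beta$ under the canonical identification $I\otimes I\cong I$, which is (2).

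For (3) I would first record the standard consequence of the dagger monoidal axioms that the dagger distributes over the tensor, $(f\otimes g)^\dagger = f^\dagger\otimes g^\dagger$. With $a\otimes c$ and $b\otimes d$ viewed as elements of $X\otimes Y$ via the identification $I\cong I\otimes I$, I would then compute
\[ \langle a\otimes c \mid b\otimes d\rangle \;=\; (a\otimes c)^\dagger(b\otimes d) \;=\; (a^\dagger\otimes c^\dagger)(b\otimes d) \;=\; (a^\dagger b)\otimes(c^\dagger d) \;=\; \langle a\mid b\rangle\otimes\langle c\mid d\rangle, \]
the third equality being the interchange law once more. Since $\langle a\mid b\rangle$ and $\langle c\mid d\rangle$ are scalars, part (2) rewrites the remaining tensor as the product $\langle a\mid b\rangle\,\langle c\mid d\rangle$, which is the desired identity.

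The argument is essentially formal, and the only obstacle is bookkeeping with the unit coherence isomorphisms: one must keep track that the relevant composites initially live in $\C(I\otimes I,\,I\otimes I)$ and are identified with scalars in $\C(I,I)$ through $u$, and --- for (3) in particular --- that this identification respects the dagger. Here the dagger monoidal hypothesis that all canonical isomorphisms are unitary does the work, since it guarantees $u^\dagger=u^{-1}$ and so makes the passage from $\C(I\otimes I, I\otimes I)$ to $\C(I,I)$ compatible with adjoints; with the associativity and unit isomorphisms treated as strict (as above) this bookkeeping becomes invisible, but it is what legitimises writing the final expressions as honest scalar products.
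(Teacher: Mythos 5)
Your proof is correct and takes essentially the same route as the paper: the paper gives no argument of its own but defers to \cite{SAAS}, where the proof is precisely this Eckmann--Hilton/Kelly--Laplaza computation with $u=\lambda_I=\rho_I$, the interchange law, and unitarity of $u$ handling the passage between $\C(I\otimes I,I\otimes I)$ and $\C(I,I)$ for part (3). One small caveat: the identity $(f\otimes g)^\dagger=f^\dagger\otimes g^\dagger$ is, in standard formulations, part of the \emph{definition} of a dagger monoidal category (the tensor is a dagger functor) rather than a consequence of the canonical isomorphisms being unitary, but since the paper's own definition is informal on this point your usage is harmless and the argument stands.
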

\begin{proof}
We refer to \cite{SAAS} for proofs.%, and note that some of these results (in particular 1.) are well-established in the field of monoidal categories
\end{proof}
Much of the terminology and notation used in dagger monoidal categories comes from a canonical motivating example:

\begin{example}\label{HilbExample}{\em This example is based on \cite{AC}.}\\
Complex finite-dimensional Hilbert spaces form a dagger monoidal category, where the monoidal tensor is the usual tensor product, and the dagger is the usual Hermitian adjoint $( \ )^H$. The unit object is then the underlying scalar field, i.e. the complex plane $\mathbb C$, and an endomorphism of the unit object is a linear map on a one-dimensional space --- that is, multiplication by some complex scalar. 

{\em Elements} of some finite-dimensional space $H$ are then simply linear maps from $\mathbb C$ to $H$ --- which are, of course, in one-to-one correspondence with the points of $H$. Moving from points of a space to linear maps of a space is exactly the idea behind  Dirac notation for states; instead of working with the point $\psi \in H$, we work with the linear map $|\psi \rangle : \mathbb C \rightarrow H$. The (categorical) generalised inner product is then exactly the composite $|\phi\rangle ^H | \psi\rangle$, i.e. the usual inner product $\langle  \phi | \psi \rangle$ of vectors in a Hilbert space, expressed in Dirac notation.

We also refer to \cite{AC} for the monoidal closure of this category, and the quantum-mechanical interpretation of the categorical operations such as evaluation and the dagger. 
\end{example}

In the above example, the generalised inner product is exactly the scalar product of vectors, and may be used to define a {\em metric} on elements of a (finite-dimensional) Hilbert space. Thus, because of the first metric axiom, the generalised inner product may be used as a test of equality for elements.  In other examples, (such as partial reversible functions on sets), the endomorphism monoid of the unit object is trivial, and the generalised inner product provides little or no information about elements. 
We axiomatise this distinction as follows: 
\begin{definition}\label{discrim}
Let $(\C,\otimes ,I (\ )^\dagger)$ be a dagger monoidal category. We say that $(\ ):\C^{op}\rightarrow \C$ {\bf discriminates elements} of $A\in Ob(\C)$ when, for all $x,y\in \C(I,A)$, 
\[  \langle x | y \rangle =1_I \ \ \Leftrightarrow \ \ x=y\in \C(I,A) \]
When $(\ )^\dagger:\C^{op}\rightarrow \C$ discriminates elements of all objects of $\C$, we simply say that it  {\bf discriminates elements}.
\end{definition}

\subsection{Elements, names, and evaluation}
In a monoidal {\em closed} category, the elements of the object $[X\rightarrow Y]\in Ob({\C})$ have a natural interpretation as arrows from $X$ to $Y$ within $\C$, as the following result makes clear: 

\begin{proposition}\label{naming}
Let
$(\C,\otimes,[ \ , ] , I)$ be a monoidal closed category. Then for all objects $X,Y\in Ob(\C)$, there is a natural bijection between elements of $[X\rightarrow Y]$, and the homset $\C (X,Y)$.  
\end{proposition}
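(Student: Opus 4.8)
The plan is to read off the bijection directly from the universal property of $Eval$ in Theorem \ref{closurecurrying}, using the monoidal unit $I$ as the test object, and then to transport the result along the unit isomorphism.

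First I would recall that an element of $[X\to Y]$ is, by Definition \ref{wellpointed}, exactly an arrow $g\in\C(I,[X\to Y])$. Instantiating Theorem \ref{closurecurrying} with $A=X$, $B=Y$ and auxiliary object $I$, the universal property says that every $f\in\C(X\otimes I,Y)$ factors uniquely as $f=Eval_{X,Y}\circ(1_X\otimes g)$ for some $g\in\C(I,[X\to Y])$. This is already a bijection $\C(I,[X\to Y])\cong\C(X\otimes I,Y)$: the map $g\mapsto Eval_{X,Y}\circ(1_X\otimes g)$ is surjective by the existence clause and injective by the uniqueness clause.

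Next I would bridge $X\otimes I$ and $X$. The right unit isomorphism $\rho_X\colon X\otimes I\to X$ induces a bijection $\C(X\otimes I,Y)\cong\C(X,Y)$ by $f\mapsto f\circ\rho_X^{-1}$. Composing the two bijections gives the claimed correspondence: $g\in\C(I,[X\to Y])$ is sent to $Eval_{X,Y}\circ(1_X\otimes g)\circ\rho_X^{-1}\in\C(X,Y)$, and conversely $h\in\C(X,Y)$ is sent to the unique $g$ satisfying $Eval_{X,Y}\circ(1_X\otimes g)=h\circ\rho_X$ --- the \emph{name} of $h$.

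For naturality I would argue that each half of the composite is natural, yielding a natural isomorphism of functors $\C^{op}\times\C\to\mathbf{Set}$ between $\C(I,[-\to=])$ and $\C(-,=)$. The currying bijection is natural in $Y$ by post-composition and contravariantly natural in $X$ via the internal hom bifunctor (whose action on arrows is itself extracted from the universal property of $Eval$), while $\rho$ is natural by the monoidal coherence conditions. I expect the only real subtlety to be naturality in the contravariant slot $X$: one must verify that pre-composing $h$ by some $j\colon X'\to X$ corresponds, under the bijection, to applying $[j\to Y]$ to the name of $h$. This is a short diagram chase combining the definition of $[j\to Y]$ with the naturality of $\rho$, with associativity bracketing suppressed by MacLane coherence; the existence and uniqueness of the bijection itself, by contrast, is immediate from Theorem \ref{closurecurrying}.
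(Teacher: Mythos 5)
Your proof is correct and takes essentially the approach the paper intends: the paper gives no in-text argument (it defers to \cite{ML,LS}), and your construction --- specializing the universal property of $Eval_{X,Y}$ from Theorem \ref{closurecurrying} at the unit object $I$ and transporting along the unitor $\rho_X$ --- is exactly the standard proof in those references, and is moreover precisely the composite the paper itself displays in the diagram immediately following, relating $f$, $1_A\otimes\ulcorner f\urcorner$, and $Eval_{A,B}$. Your naturality sketch, including the flagged diagram chase for the contravariant slot via the internal hom's action on arrows, is the right way to complete the statement's ``natural bijection'' claim and supplies more detail than the paper does.
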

\begin{proof}This is a standard result from the theory of closed categories and categorical logic \cite{ML,LS}. 
\end{proof}

\begin{definition}
Given a monoidal closed category $(\C,\otimes,[ \ , ] , I)$, and an arrow $f\in \C(X,Y)$, then its image under the bijection  of Proposition \ref{naming} above is called the {\bf name} of $f\in \C(X,Y)$, written $\ulcorner f \urcorner \in \C(I,[X\rightarrow Y])$.
\end{definition}

The intuitive meaning of evaluation is that it `promotes' an {\em element} (i.e. the name of an arrow $\ulcorner f \urcorner$) to actual {\em arrow} within the category; more formally, the following diagram is a special case of the diagram of Theorem \ref{closurecurrying}.

\[ \xymatrix{
A \ar[rr]^f && B \\
A \otimes I \ar@{-}[u]^\cong \ar[rr]_<<<<<<<<<{1_A \otimes \ulcorner f \urcorner } && A \otimes [A\rightarrow B] \ar[u]_{Eval_{A,B}}
} \]
Thus, for example, we see that an element of the `intransitive verb' object (as in Section \ref{theaim}) may also be considered as an arrow from the `noun phrase' object to the `sentence' object.

Given that {\em elements} of an object are themselves arrows in a category, it is natural to wonder what the name of an element is, or indeed the name of the name of an element, etc. Fortunately, such an eternal recurrence is avoided by the fact that, up to canonical isomorphism, elements `name themselves'.  Precisely, for any element $x\in {\C}(I,A)$ in a monoidal closed category, the following diagram commutes:
\[ 
\xymatrix{
														& I 	\ar@{-}[dl]_{\cong}	\ar[dr]^x			&	\\
I \otimes I	\ar[r]|{1_I\otimes x}	\ar[dr]_{1_I\otimes \ulcorner x \urcorner} 	& I\otimes A	\ar@{-}[r]|\cong				& A	\\
														& I\otimes [I\rightarrow A]	\ar[ur]_{Eval_{I,A}}	&	\\
} \]
{\bf Convention} In the above diagram, lines denoting canonical coherence isomorphisms are simply labelled by ``$\cong$''. We follow this convention throughout, unless the precise coherence isomorphism is important.

\subsection{Types for linguistics and models of meaning}

In categorical linguistics, a common method of characterising grammatically correct sentences is to assign types (i.e. objects in a monoidal closed category) to words in such a way that the evaluation map takes all grammatically correct sentences to a single distinguished type $S$, called the {\em sentence type}. In particular as shown in \cite{CSC}, this is the structure behind Lambek's {\em pregroup semantics} and other approaches to categorical linguistics. Thus, standard categorical models of linguistics provide a type system for models of meaning applicable to arbitrary sentences; however, we do not yet have actual elements, or any notion of how these interact with the evaluation process, the generalised inner product, or the monoidal tensor. An obvious analogy exists with the very powerful tool of {\em dimensional analysis} in basic physics \cite{KLR}, which may be considered to be the underlying type system behind the SI units of Section \ref{typeexample}, abstracted from consideration of actual quantities.

The remainder of this paper may be considered as an investigation of what it would mean to (re)introduce actual elements to the type system provided by categorical linguistics, and indeed what modifications must be made to the typing to account for the fact that we are interested in meaning as well as grammar.

\subsection{Typed models of meaning --- 
a toy example}\label{toyexample}
In order to avoid becoming too abstract, we use a concrete example to illustrate how the program described above may be used to compare two sentences. We will use the following examples:
\begin{itemize}
\item[$L1$]  {\em Bobby loves Marilyn Monroe.}
\item[$L2$] {\em I like Fidel Castro and his beard.}
\end{itemize}
The classically educated reader will recognise these as lyrics from Bob Dylan songs.
%\footnote{
%Precisely, L1 is from a live performance of ``Subterranean Homesick Blues" (available from {\em http://www.youtube.com/} at the time of writing (June 2011)), and L2 from ``Motorpsycho Nightmare'', from the ``Another side of Bob Dylan''  album (1964). 
%Short fragments of copyrighted material are used in accordance with Oxford University Press `Notes to Authors' (http://www.oup.co.uk/pdf/bt/author\_guidelines.pdf% Accessed 30/10/2011). 
(Note that one of these lyrics is from an improvised live performance and is not part of the official Dylan canon \cite{BD}); our interest is in how a typed model of meaning could be used to compare these distinct lyrics. 

 The first step we take is to instantiate the variable\footnote{Note that this is an {\em exophoric reference} since the pronoun {\em I} is not bound to any noun phrase within the text itself.} in $L2$; as we are familiar with these sentences as Bob Dylan lyrics, it is reasonable to replace {\em I} by {\em Bob Dylan}, and adjust the verb from the first to the third person, giving 
\begin{itemize}
\item[$L2'$] {\em Bob Dylan likes Fidel Castro and his beard.}
\end{itemize}
%(Note that these two instantiation processes are {\em not} equivalent. In particular, the correct instantiation of {\bf \em his} may be derived from the sentence $\bf L1$; however, the meaning of {\bf \em I} may only be derived from the additional knowledge we have about the authorship of the lyrics). 
We draw these two sentences $L1$ and $L2'$ in tree form as shown in Figure \ref{sentencetrees}, and consider how both the individual constituents and the sentences as a whole may be compared.
\begin{figure}[h]\label{sentencetrees}\begin{center}\caption{Trees for $L1$ and $L2'$}
\scalebox{0.8}{ \xymatrix{
	 		{ L1}						&	*+[F:<3pt>]\txt{loves} \ar@{-}[dl]	\ar@{-}[dr]		& 											&    \\
			*+[F:<3pt>]\txt{Bobby}		& 										& *+[F:<3pt>]\txt{Marilyn \\ Monroe } 					&  \\
									&										&											& \\
	 		{L2'}						&	*+[F:<3pt>]\txt{likes} \ar@{-}[dl]	\ar@{-}[dr]		& 											&    \\
			*+[F:<3pt>]\txt{Bob \\ Dylan}	& 										& *+[F:<3pt>]\txt{and } \ar@{-}[dl]	\ar@{-}[dr]			&  \\
									& 	 *+[F:<3pt>]\txt{Fidel \\ Castro}				& 											&  *+[F:<3pt>]\txt{his \\ beard} \\
} }
\end{center}
\end{figure}
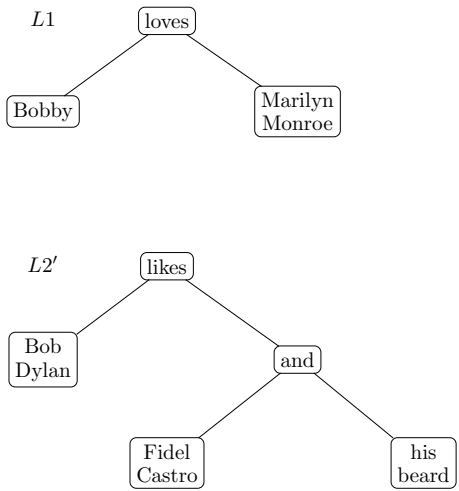

From these trees, we see that the basic grammatical components we require are:
\begin{itemize}
\item {\bf Sentences (S)} {\em `Bob Dylan likes Fidel Castro and his beard'},  {\em `Bobby loves Marilyn Monroe'}.
\item {\bf  Transitive Verbs (TV)} {\em likes}, {\em loves}.
\item {\bf Connectives (C)} {\em and}.
\item {\bf noun phrases (NP)} {\em Bob Dylan}, {\em Bobby}, {\em Marilyn Monroe}, {\em Fidel Castro}, {\em Fidel Castro's beard}, {\em Fidel Castro and his beard}.
\end{itemize}
(We observe above that joining two noun phrases by a connective forms another noun phrase. This  possibility must be reflected in the typing associated with connectives; this is explored further in Section \ref{connectivetyping} onwards).

\subsection{Categorical features for a model of meaning}
We now consider the requirements for some category $\M$ in which the meanings of $L1$ and $L2'$ may be evaluated and compared. We do not present a concrete example; rather we use machinery developed to lay down requirements that such a category must satisfy, and go on to consider the resulting categorical theory.

For a type system that allows us to model grammatical structure and reduce grammatical sentences to elements of the same type, we require a {\em monoidal closed} structure, so $\M$ is equipped with a monoidal tensor $(\_ \otimes \_ ): \M \times \M \rightarrow \M$, a unit object $I\in Ob(M)$,  and an internal hom, $[\_ \rightarrow \_] : \M^{op} \times \M \rightarrow \M$. In order to compare elements, we will also require a dagger operation $( \_ )^\dagger : \M^{op}\rightarrow \M$, compatible with the monoidal tensor, giving  a generalised scalar product.

As our analysis will be based on elements of objects, it is reasonable to assume that $(\M,\otimes ,  [\underline{ } \rightarrow \underline{\ }], I , (\ )^\dagger)$ is well-pointed (Definition \ref{wellpointed}).  We further assume that our model of meaning is {\em complete}, in the sense that distinct concepts are not unnecessarily identified by the generalised scalar product\footnote{This is not always a reasonable assumption to make, depending on the intended purpose of our model of meaning. In particular, the very successful field of {\em sentiment analysis} \cite{KH} makes an entirely different assumption!}; categorically, this requires that 
the dagger operation {\em discriminates elements}, in the sense of Definition \ref{discrim}.

The category $\M$ must also contain objects corresponding to the grammatical components given in Section \ref{toyexample} above. Thus $NP,C,TV,S\in Ob(\M)$ are the objects corresponding to the {\em noun phrase}, {\em connective}, {\em transitive verb}, {\em sentence} types; we take the noun phrase and sentence types $NP,S\in Ob(M)$ as primitive and build up the others in terms of their desired behaviour under evaluation. 

Finally, for illustrative purposes, we take $\M (I,I)$ to be the unit interval $[0,1]$. The putative interpretation is that $\langle x|y\rangle = 1$ means complete equality of meaning between elements $x$ and $y$, whereas $\langle x|y\rangle =0$ means that they have nothing in common. Composition of endomorphism arrows of the unit object is, as per the requirements of Proposition \ref{abstractScalars}, simply multiplication.

\subsection{Comparing simple nouns}\label{SNsp}
Let us start with the respective subjects of $ L1$ and $L2'$, the noun phrases {\em Bobby} and {\em Bob Dylan}. In a suitable typed system, these will be represented by two distinct elements of type $NP$
\[ 
\xymatrix{
I \ar[r]^{\mbox{Bobby}} & NP & & I \ar[rr]^{\mbox{Bob Dylan}}& & NP } \]
%I \stackrel{Bobby}{\longrightarrow} NP \ \ , \ \ I \stackrel{Bob\  Dylan}{\longrightarrow} NP \]
These elements may be compared by computing their generalised inner product, giving  
\[ 
\xymatrix{
I \ar[rr]^{\mbox{Bobby}} \ar[drr]_<<<<<<<<{\langle \mbox{Bobby} | \mbox{Bob Dylan}\rangle\ \ \ } 	&& NP  \ar[d]^{\mbox{Bob Dylan}^\dagger} \\ 
																&& I 
 } \]
%$\langle Bobby | Bob\ Dylan\rangle \in \C(I,I)$. 

Although this chapter does not present a concrete model of meaning, observe that the above comparison would be straightforward, using the distributional semantics approach described in Chapter 6. In the absence of any concrete data, we  make a guess for illustrative purposes, and write 
\[ \langle Bobby | Bob\ Dylan\rangle \ = \ 0.98 \in \M(I,I) \]
Giving a high, if not perfect overlap\footnote{Readers familiar with $20^{\mbox{th}}$ Century US culture might assume that {\em Bobby} in $L1$ instead refers to Robert Kennedy. Whether or not this interpretation is correct, it is neither apparent from the individual lines, nor the songs as a whole. Interpreting texts in their appropriate historical and cultural context is a significant challenge for models of meaning generally.}  between {\em Bobby} and {\em Bob Dylan}.

\subsection{Comparing transitive verbs}\label{TVsc}
We now compare the central verbs of $L1$ and $L2'$, i.e. we wish to assign a value to the generalised inner product $\langle \mbox{\em likes}|\mbox{\em loves}\rangle$. However, it is worth considering the typing that these elements must have. Recall from Section \ref{theaim} that an {intransitive} verb can only have type $[NP\rightarrow S]$. Thus, we wish a {transitive} verb to have a suitable type so that, when given an object (i.e. a noun phrase) on its right hand side, it returns something of type $[N\rightarrow S]$. Thus, an intransitive verb must have type $[[NP\rightarrow S] \leftarrow NP]$. 

The comparison of {\em likes} and {\em loves}  is the following generalised inner product 
\[ 
\xymatrix{
I \ar[rr]^<<<<<<<<<<<{\mbox{loves}} \ar[drr]_{\langle \mbox{likes} | \mbox{loves}\rangle\ \ \ } 	&& [[N\rightarrow S]\leftarrow N]  \ar[d]^{\mbox{likes}^\dagger} \\ 
																&& I 
 } \]

We again make an arbitrary guess\footnote{Based on a talk by Mehrnoosh Sadrzadeh (Oxford, Oct. 2010),  where denotational semantics was introduced using the illustration that ``{\em likes} is $\frac{3}{4}$ {\em loves} and $\frac{1}{4}$ {\em hates}''.} and write 
\[ \langle likes | loves\rangle \ = \ 0.75 \in \M(I,I) \]

\subsection{Comparing noun phrases}\label{NPsc}
In terms of comparing the primitive elements of $L1$ and $L2$, it now remains to compare the two objects of the transitive verbs: {\em  Marilyn Monroe}, and {\em Fidel Castro and his beard}. Leaving aside for the moment the details of how two noun phrases may be combined with a connective to produce a further noun phrase, we are happy to declare that there never has been any significant overlap between {  \em  Marilyn Monroe} and {  \em Fidel Castro and his beard}. Thus, our educated guess at this point is simply that 
\[ \langle Marilyn\ Monroe| Fidel\ Castro\ and\ his\ beard\rangle \ = \ 0 \in \M (I,I)  \]

\section{Comparing words vs. comparing sentences}\label{wordsentence}
Bringing together the  (entirely fictitious) values for the overlap between the meanings of words introduced above, we have the table shown in Figure \ref{wordcomparisons}.

\begin{figure}[b]\label{wordcomparisons}\begin{center}\caption{Comparisons of individual words in $  L1$ and $  L2$}

\scalebox{0.8}{
\begin{tabular}{c|c|c}
& & \\
Bobby & loves & Marilyn Monroe. \\
  Bob   Dylan & likes & Fidel Castro and his beard. \\
  \hline
$\langle Bobby | Bob \ Dylan\rangle$  & $\langle likes | loves \rangle$  & $\langle Fidel \ \& \ his \ beard | Marilyn \ Monroe \rangle $   \\
$=$ & $=$ & $=$ \\
$0.98$ & $0.75$  & $ 0.00$ \\
\end{tabular}
}
\end{center}
\end{figure}

The crucial question is whether these three values are enough to compare the meaning of $  L1$ and $  L2'$ ? We first appeal to part \ref{scalar_tensor} of Proposition \ref{abstractScalars}, we may compute the generalised inner product of $  L1$ and $  L2'$, {\em considered as elements of type $NP \otimes [ [NP\rightarrow S]\leftarrow NP] \otimes NP$}.

\begin{proposition} Using the values for the generalised scalar products of individual word proposed in Sections \ref{SNsp} - \ref{NPsc}, the inner product of the elements 
\[ {\bf  L1} \ : \ I \rightarrow \ NP \otimes [ [NP\rightarrow S]\leftarrow NP] \otimes NP \]
\[ {\bf L2'} \ : \ I \rightarrow \ NP \otimes [ [NP\rightarrow S]\leftarrow NP] \otimes NP \]
is exactly $\langle {  L1}|{  L2'}\rangle = 0\in \M(I,I)$.
\end{proposition}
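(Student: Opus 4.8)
The plan is to exploit the fact that, by the construction of the toy example, the two sentence-elements are \emph{pure tensors}: from the trees of Figure~\ref{sentencetrees} we read off
\[ {\bf L1} = \text{Bobby} \otimes \text{loves} \otimes \text{Marilyn Monroe}, \qquad {\bf L2'} = \text{Bob Dylan} \otimes \text{likes} \otimes \text{Fidel Castro and his beard}, \]
where each tensor factor is precisely the element of the corresponding factor of $NP \otimes [[NP\rightarrow S]\leftarrow NP] \otimes NP$ introduced in Sections~\ref{SNsp}--\ref{NPsc}. The entire content of the computation then comes from the multiplicativity of the generalised inner product with respect to $\otimes$, that is, from part~\ref{scalar_tensor} of Proposition~\ref{abstractScalars}.

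First I would reduce the ternary inner product to a product of three binary ones. Part~\ref{scalar_tensor} of Proposition~\ref{abstractScalars} is stated for a single tensor $X\otimes Y$, so I would apply it twice --- using that we are treating associativity as strict --- to the grouping $NP \otimes \bigl( [[NP\rightarrow S]\leftarrow NP] \otimes NP \bigr)$, obtaining
\[ \langle {\bf L1} | {\bf L2'} \rangle = \langle \text{Bobby} | \text{Bob Dylan} \rangle \; \langle \text{loves} | \text{likes} \rangle \; \langle \text{Marilyn Monroe} | \text{Fidel Castro and his beard} \rangle, \]
an identity in the abelian monoid $\M(I,I)$, where by parts~1 and~2 of the same Proposition the three scalar factors may be multiplied in any order.

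Finally I would substitute the values guessed in Sections~\ref{SNsp}--\ref{NPsc}, after checking one mild point: Section~\ref{TVsc} records $\langle \text{likes} | \text{loves}\rangle = 0.75$, whereas the factor appearing above is $\langle \text{loves} | \text{likes}\rangle = \langle \text{likes} | \text{loves}\rangle^\dagger$; since $\M(I,I)=[0,1]$ consists of self-adjoint real scalars the dagger acts trivially, and the two agree. Substituting $0.98$, $0.75$ and $0$, and using that composition in $\M(I,I)$ is ordinary multiplication, gives
\[ \langle {\bf L1} | {\bf L2'} \rangle = 0.98 \times 0.75 \times 0 = 0 \in \M(I,I). \]
I do not expect any serious obstacle: the only technical work is the routine promotion of part~\ref{scalar_tensor} from two tensor factors to three. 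The genuinely noteworthy feature --- and, I suspect, the reason the statement is isolated as a Proposition --- is conceptual rather than computational: a single orthogonal pair of constituents (the two objects of the transitive verbs) forces the whole sentence overlap to vanish, completely swamping the high overlaps $0.98$ and $0.75$ recorded elsewhere. This collapse is exactly the undesirable behaviour that motivates the subsequent reconsideration of how elements ought to interact with the typing and the evaluation process.
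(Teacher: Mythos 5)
Your proposal is correct and takes essentially the same route as the paper, whose entire proof is a one-line appeal to part \ref{scalar_tensor} of Proposition \ref{abstractScalars} together with the values tabulated in Figure \ref{wordcomparisons}. The details you supply --- iterating the binary multiplicativity rule to three tensor factors under strict associativity, and observing that $\langle \mbox{loves} \, | \, \mbox{likes} \rangle$ is the dagger of the recorded scalar $0.75$ and hence equal to it in $\M(I,I)=[0,1]$ --- are exactly the routine steps the paper leaves implicit.
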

\begin{proof}
This follows from the values given in Figure \ref{wordcomparisons}, and part \ref{scalar_tensor} of Proposition \ref{abstractScalars}, where the interaction of generalised inner products and monoidal tensors is given.
\end{proof}
However,  we have compared these sentences {\em before any evaluation has taken place} --- and the whole point of the typing system was that all well-formed sentences evaluate to the same sentence type $ S$. The key question is then whether this matters, i.e.
\begin{center}{\em Is comparison of sentences invariant under evaluation?}\end{center}

\section{Inner products, evaluation, and inverses}\label{SPpreserving}
The question at the end of Section \ref{wordsentence} above should properly be considered as two distinct questions: 
\begin{enumerate}
\item\label{untyped} Does the evaluation arrow $Eval_{A,B}=\in \C(A \otimes [A\rightarrow B] , B)$ preserve inner products?
\item\label{forgetfulnames} When the meaning of a word is some name $\ulcorner f \urcorner \in \C(I,[X\rightarrow Y])$, does the arrow $f\in \C(X,Y)$ preserve inner products?
\end{enumerate}
Question \ref{untyped} is a fundamentally category-theoretic question, whereas question 
\ref{forgetfulnames} is about how we expect categorical models of meaning to behave. %, rather than about category theory itself.  
In a dagger monoidal closed category $(\C , \otimes , [ \_ \rightarrow \_ ], (\ )^\dagger)$, both isometries and unitaries preserve generalised inner products, and the canonical isomorphisms for the monoidal structure are unitary. Therefore, any dagger monoidal category contains inner product preserving arrows -- question \ref{forgetfulnames} is simply asking whether any of these have a role to play in models of meaning.

As we are working within a well-pointed monoidal category with a dagger that discriminates elements, both these questions are about  whether various categorical operations `lose information', as the following result demonstrates:
\begin{lemma}\label{IPPinverse}
Let $(\mathcal C,\otimes ,(\ )^\dagger)$ be a well-pointed dagger monoidal category where the dagger discriminates elements, and let $F\in \C(A,B)$ preserve generalised scalar products, Then $F$ is an isometry, i.e. $F^\dagger\in\C(B,A)$ is a left inverse of $F\in \C(A,B)$.
\end{lemma}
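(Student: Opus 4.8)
The plan is to turn the arrow identity $F^\dagger F = 1_A$ into a pointwise statement via well-pointedness, and then verify it using the discrimination hypothesis together with scalar-preservation. The key preliminary observation --- and the step I expect to be the main obstacle --- is that the discrimination hypothesis secretly forces every element to be \emph{normalised}: putting $x=y$ in the ``$\Leftarrow$'' direction of Definition \ref{discrim} gives $\langle a | a\rangle = 1_I$ for all $a \in \C(I,A)$. Without noticing this, the final computation below stalls at $\langle a | a\rangle$, with no apparent reason for it to equal $1_I$, so the whole argument hinges on recognising that ``discriminates elements'' is a strong condition encoding both an equality test \emph{and} normalisation.

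Since $\C$ is well-pointed, it suffices (by the contrapositive of Definition \ref{wellpointed}, applied to the arrows $F^\dagger F$ and $1_A$ in $\C(A,A)$) to show that $F^\dagger F a = a$ for every element $a \in \C(I,A)$. Fixing such an $a$, both $F^\dagger F a$ and $a$ are elements of $A$, so by discrimination I need only verify $\langle a | F^\dagger F a\rangle = 1_I$. Using the contravariant functoriality of the dagger, namely $(Fa)^\dagger = a^\dagger F^\dagger$, a short computation gives
\[ \langle a | F^\dagger F a\rangle \ = \ a^\dagger F^\dagger F a \ = \ (Fa)^\dagger(Fa) \ = \ \langle Fa | Fa\rangle. \]
Scalar-preservation of $F$ then gives $\langle Fa | Fa\rangle = \langle a | a\rangle$, and the normalisation from the first step gives $\langle a | a\rangle = 1_I$. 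Hence $\langle a | F^\dagger F a\rangle = 1_I$, so $F^\dagger F a = a$ by discrimination; as $a$ was arbitrary, well-pointedness yields $F^\dagger F = 1_A$, which is precisely the assertion that $F$ is an isometry with left inverse $F^\dagger$.

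Two small points I would double-check while writing this up. First, scalar-preservation is invoked only on the diagonal $x = y = a$, which is all the argument requires, so the hypothesis is being used in its weakest form. Second, ordering the inner product as $\langle a | F^\dagger F a\rangle$ rather than $\langle F^\dagger F a | a\rangle$ lets me avoid any appeal to self-adjointness of $F^\dagger F$; the latter would also serve, since $(F^\dagger F)^\dagger = F^\dagger F$ follows at once from involutivity and functoriality of the dagger, but the chosen ordering keeps the calculation to a single line.
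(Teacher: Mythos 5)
Your proof is correct and takes essentially the same route as the paper's: both arguments reduce the identity $F^\dagger F = 1_A$ to a pointwise check on elements via well-pointedness, use scalar-preservation and functoriality of the dagger to get $\langle a \,|\, F^\dagger F a\rangle = \langle a \,|\, a\rangle$, and close with discrimination. Your diagonal specialisation and the observation that Definition~\ref{discrim} forces the normalisation $\langle a \,|\, a\rangle = 1_I$ merely make explicit what the paper's terse final sentence (``as $\C$ is well-pointed with a dagger that discriminates elements, we deduce\ldots'') leaves implicit.
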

\begin{proof}
Consider arbitrary elements of $x,y\in \C(I,A)$. As $F$ preserves inner products, the following diagram commutes:
\[  \xymatrix{
A 	\ar[r]^F			& B \ar[r]^{F^\dagger}	& A \ar[d]^{y^\dagger}	 \\
I \ar[u]^x \ar[r]^x	& A \ar[r]^{y^\dagger} 	& I 	\\
} \]
Simplifying this commuting diagram, we have
\[  \xymatrix{
 		& B \ar@/^6pt/[d]^{F^\dagger}	& 	 \\
I \ar[r]^x 	& A\ar@/^6pt/[u]^F \ar[r]^{y^\dagger} 	& I 	\\
} \]
Thus, as $\C$ is well-pointed with a dagger that discriminates elements, we deduce that $F^\dagger$ is a left inverse of $F$.
\end{proof}

Note that the above result does not prove that $F^\dagger$ is a two-sided inverse; indeed, in arbitrary Hilbert spaces, the inner-product preserving isomorphisms are exactly the unitary maps, whereas inner product preserving linear maps are simply isometries (which do indeed have a left inverse, but not necessarily a two-sided inverse).

\section{Does evaluation preserve inner products?}
We first address Question \ref{untyped} of Section \ref{SPpreserving} above: is the evaluation map an isometry -- i.e. is its dagger also a left inverse?

Given elements $x,y$ of an object $A \otimes [A\rightarrow B]$ in some monoidal closed category with a discriminating dagger, we may form elements of $B$ by composing both $x$ and $y$ with  the canonical evaluation map $Eval_{A,B}:A \otimes [A\rightarrow B]\rightarrow B$ as shown below:
\begin{center}
\scalebox{0.8}{ \xymatrix{
I \ar[rr]^{x}\ar[ddrr]_{Eval_{A,B}  x} 	& & A \otimes [A\rightarrow B] \ar[dd]|{Eval_{A,B}}& &  I \ar[ll]_y \ar[ddll]^{Eval_{A,B} y}  \\ 
						& & 										\\ 
						& & B 									 \\ 
} 
}
\end{center}
From Lemma \ref{IPPinverse}, a necessary condition for the the following diagram to commute
\begin{center}
\scalebox{0.8}{ \xymatrix{
& 							 	A \otimes [A\rightarrow B] \ar[dr]^{x^\dagger} 				& \\ 
  I 	\ar[ur]^{y} \ar[dr]_{Eval_{A,B} y}	&   {\mbox{\small Commutes?}} 	&  I \\ 
		 				&	B  \ar[ur]_{(Eval_{A,B}x)^\dagger}		& \\ 
} 
}
\end{center}
is that $Eval_{AB}$ has a left inverse. Leaving aside the irrelevant (for our purposes) case where $\C$ does not have a discriminating dagger, in general the above diagram does {\em not} commute. One of the simplest counterexamples is the motivating example of Example \ref{HilbExample}, and quantum-mechanical interpretations of its categorical properties, where Evaluation interprets as (post-selected partial) measurement against a maximally entangled basis \cite{AC}. Of course, one of the most fundamental features of the Hilbert space model of quantum mechanics is that measurement (partial or total) is certainly not a reversible operation. Other examples include models of logic or lambda calculus, where evaluation is either $\beta$-reduction, or cut-elimination --- neither of which are reversible operations\footnote{We do not claim that in any monoidal closed category, the evaluation arrow {\em cannot} be invertible. In particular, monoidal closed categories of partial reversible functions, where all objects are isomorphic,  have been constructed in \cite{PH98,PH99,AHS}. Leaving these rather esoteric examples aside, our claim is that a (much more usual) irreversible evaluation is highly desirable and useful for typed models of meaning}. 

The question is whether this is {\em desirable} or {\em undesirable} for a model of meaning? The linguistic justification for the answer to the second question of Section \ref{SPpreserving} helps demonstrate that it is in fact desirable.

\subsection{Forgetfulness - a linguistic justification}\label{namedforgetting}
We now address Question \ref{forgetfulnames} from Section \ref{SPpreserving}. The claim that we make is that it is {\em vital} for the evaluation process to be {\em irreversible}, since we need it to forget information ---  it is highly desirable that the arrows named by elements in our models of meaning do not have inverses.

As a motivating example, consider the simple noun phrase {  \em  scruffy cats}, built up from an adjective and another noun phrase:
\[ \xymatrix{ 
I \ar[r]^{   scruffy} & AD   &  &  I \ar[r]^{  cats} & NP \\
 } \]
The noun phrase {  \em cats} is a simple element of the object $  NP$, and from its behaviour we deduce that an adjective has typing $  AD=[NP \leftarrow NP]$. The term {  \em  scruffy cats}, before any reduction, is therefore is the following element:
\[ \xymatrix{
I \ar[rr]^<<<<<<<<<{   scruffy\ \otimes \  cats} & & [NP \leftarrow NP]\otimes NP \\
 }
\]
Momentarily forgetting about typing questions, let us assume that the `meaning' of both {  \em  scruffy} and {  \em cats} has been derived using some variant of the distributional semantics described in Chapter 6.  The `meaning' of {  \em cats} will then provide information about cats generally, whether scruffy, tidy, or invisible. Similarly, the adjective {  \em  scruffy} provides information about the general concept of scruffiness, whether applied to cats, dogs, or academics.

From Proposition \ref{naming}, an element ${   scruffy}\in \M(I, [NP\leftarrow NP])$ 
is the name of some arrow $\widetilde{   scruffy} \in \M(NP,NP)$. We then see that, at least in this setting,  the arrow named by ${   scruffy}\in \M(I,[NP\leftarrow NP])$ has something of the nature of a projector, or a partial identity, in that it acts to restrict a concept to a special case.

The above is not, of course, a formal justification. However, we also observe that reduction is often a multi-stage process, and the ability to compare sentences or sentence fragments at different levels may be a highly useful feature of a typed model of meaning.  Consider sentences $s_0,t_0$ of some compound type $[S\leftarrow X]\otimes Y\otimes [Y\rightarrow X]$. This compound type may be reduced to $S\in Ob(\M)$ in two stages.
\[ 
\xymatrix{
							&	& [S\leftarrow X]\otimes Y \otimes [Y\rightarrow X] \ar[d]^{1\otimes Eval_{Y,X}} \\
I \ar[drr]_{s_2,t_2}	\ar[rr]|{s_1,t_1}	 \ar[urr]^{s_0,t_0} &	& [S\leftarrow X]\otimes X \ar[d]^{Eval_{X,S}}\\
							&	& S 
} 
\]
given that $\langle s_0 | t_0 \rangle \neq \langle s_1 | t_1 \rangle \neq \langle s_2 | t_2 \rangle $, we observe that it is possible to compare sentences {\em at many different levels}, depending on how much reduction has been carried out. This unusual feature may prove useful in dealing with ambiguity, or indeed in assigning meaning to non-compositional phrases such as {  \em Iron Curtain}, where the meaning of this phrase is not derived by restricting the information about all possible curtains to those made of iron.

\section{How to type connectives?}\label{connectivetyping}
We have taken a digression in our aim of comparing the meaning of two distinct Bob Dylan lyrics; in particular, we left the question of how to deal with connectives unanswered.  This was intentional, in that -- as we demonstrate below -- the behaviour of connectives is closely connected with questions of reversibility and evaluation. 

Recall that we treated the noun phrases
\begin{enumerate}
\item {\em  Marilyn Monroe}
\item\label{hairyfidel}{\em  Fidel Castro and his beard}
\end{enumerate}
simply as two distinct noun phrases. However, \ref{hairyfidel} above is clearly the {\em conjunction} of two distinct noun phrases; rather than being a noun phrase itself, it is a compound that should {\em evaluate to} a noun phrase. The question then, is simply, how should we type {\em and} ?
As the typing will prove rather intricate, we first consider an alternative method of dealing with connectives:

\subsection{Distributivity and conjunction}
A common point of view is that, given a sentence containing the conjunction of two noun phrases, it should simply be split in two using {\em distributivity},  and the two sentences treated separately. For example, using distributivity, 
\begin{itemize}
\item[L2]{  \em Bob Dylan likes Fidel Castro and his beard}
\end{itemize}
would be replaced by
\begin{itemize}
\item[L$2_a$]{  \em Bob Dylan likes Fidel Castro.}
\item[L$2_b$]{  \em Bob Dylan likes Fidel Castro's beard.}
\end{itemize}
This seems to be valid from a grammatical point of view, and (assuming we  resolve the anaphor {  \em his} before applying distributivity) the meaning of $L1$ is indeed the conjunction of $L1_a$ and $L1_b$. However, this is not always the case. Consider the following sentence:
\begin{itemize}
\item[T] {  \em Fidel Castro and Marilyn Monroe played tennis}
\end{itemize}
Applying distributivity, we get the (grammatically correct) 
\begin{itemize}
\item[$T_a$] {  \em Fidel Castro played tennis.}
\item[$T_b$] {  \em Marilyn Monroe played tennis.}
\end{itemize}
Intuitively, we are happy to believe the conjunction of $T_a$ and $T_b$, but find  $T$ rather implausible -- since tennis is generally an activity indulged in by two people, we deduce that it was a joint, shared game of tennis. 

Although the above example is somewhat facetious, the question of when and whether applying distributivity changes meaning has been heavily studied \cite{YW}, including in a legal context  \cite{CV}. See \cite{TH} for a particular case involving arguments on whether distributivity is applicable to conjunction in the phrase {\em ``to keep and bear arms"}, and whether doing so changes the meaning of this phrase.
It appears that, when we consider meaning as well as grammatical correctness, we are forced to consider how the connectives ({  \em and}, {  \em or}, etc.) are typed, and behave under evaluation.

\subsection{Typing connectives and polymorphism}
The first problem is that although (based on its usage in $  L1$) we might simply wish to type {  \em and} as an element of 
$[[NP \rightarrow NP]\leftarrow NP]$ (or equivalently, $[NP \rightarrow [NP\leftarrow NP]]$), the word `and'  is used in other settings, as Figure \ref{conjunctions} demonstrates.

\begin{figure}[t]\label{conjunctions}\begin{center}\caption{Conjunction as a polymorphic connective}\end{center}

\begin{tabular}{|r|l|}
\hline
\multicolumn{2}{|c|}{} \\
\multicolumn{2}{|c|}{\large Conjunction in different contexts} \\
\multicolumn{2}{|c|}{} \\
 \hline
\hline
	& \\ 
Noun phrases  & {  \em Fidel Castro and his beard} 	\\
	& \\ 
\hline
	& \\ 
Transitive verbs  & {  \em Bobby loves and obeys Marilyn Monroe}	\\
	& \\ 
\hline
	& \\ 
Adjectives & {  \em Fidel's big and bushy beard}					\\
	& \\ 
\hline
	& \\ 
Sentences & {  \em Bobby likes Fidel and I like Marilyn Monroe}		\\
	& \\ 
\hline
\end{tabular}
\end{figure}

However, in every case, the appropriate typing appears to be 
\[ [[X \rightarrow X]\leftarrow X] \]
where $X$ ranges over types, according to context. The same phenomenon appears to apply to other binary connectives\footnote{e.g. $  or$ may be substituted for $  and$ in any of the above. Also, although English does not have a single connective corresponding to `exclusive or', one could easily conceive of sentences such as {  \em I like exactly one of Fidel Castro and his beard}, which would behave in a similar way. However, the same does not hold for {\em implies}, which is generally applied to entire sentences only. }
Thus, it appears that the typing of binary connectives is {\em polymorphic}. We refer to \cite{JR} for the notion of parametrised types in computer science, and \cite{JYG} for {\em System F}, the polymorphic lambda calculus. Borrowing notation from this polymorphic lambda calculus, we write the type of $ and$ as
\[ \Lambda X . [[X \rightarrow X]\leftarrow X] \]
We do not give a full treatment in terms of the polymorphic lambda calculus; rather we simply treat this as shorthand for the following: Given some binary connective $B$, then the type of $B$ is dependent on the context; given some element of type 
\[ U \otimes B \otimes V \]
together with evaluation arrows $Eval_{U,X}$ and $Eval_{V,X}$, then the `polymorphically typed'  connective $B$ provides us with some element $B_X$ of type 
\[ X\rightarrow [X\leftarrow X] \ \ \mbox{ or equiv. } \ \ [X\rightarrow X]\leftarrow X \]

\subsection{Forgetfulness and binary connectives}\label{forgettingAnd}
In Section \ref{namedforgetting}, we make an argument, based on linguistic interpretation, that the arrows of a category named by word of various types are {\em forgetful} --- they lose information, the example given being how adjectives should, in certain cases, act as projectors or partial identities, on noun phrases. However, it is clear that the connectives do not follow this general principle: when we use {  \em and} to concatenate two sentences (or noun phrases, adverbs, \&c.), we do not expect to lose any information about the constituents in this conjunction. 

As a trivial example, consider taking some body of text, and replacing each full stop (period) by ``and''. Although legibility will rapidly be lost, it would be difficult to claim that any meaning or content has been erased. Thus, the element 
\[ \xymatrix{
I \ar[rr]^<<<<<<<<<{{   and}_X} && [X\rightarrow [X\leftarrow X]]  } \]
is the name of an arrow in $\M (X,[X\leftarrow X])$ that is  {\em information-preserving} in the sense laid out in Section \ref{SPpreserving} and Section \ref{namedforgetting}. We consider the implications of this shortly, but first use some abstract category theory to simplify the types of arrows being named.

\subsection{Revisiting types of connectives}
In order to make a {\em considerable} simplification of the resulting theory, we now make the assumption that the left evaluation arrow and the right evaluation arrow are identical (at least, up to some canonical symmetry isomorphism). Although there is no decisive linguistic justification for this in general\footnote{although since the connectives  {  \em and} and {  \em or} appear to be symmetric, this assumption is indeed justified for the particular examples we consider.}, it is certainly satisfied by {\em compact closed categories} \cite{KL},  which feature heavily in models of linguistics and meaning such as the vector spaces as used in distributional semantics,  the more general models of meaning of \cite{CSC}, and purely grammatical models such as Lambek pregroups \cite{JL}. 

Given this assumption, we may appeal to the defining equations of monoidal closure, from Definition \ref{monoidalclosed}, and -- up to isomorphism -- replace elements of type $\M((I,[X\rightarrow [X\leftarrow X]])$ by elements of type $\M(I,[X\otimes X\rightarrow X])$.

Thus (up to some canonical isomorphism that we elide in the following sections), a polymorphic connective such as {  \em and} determines a family of elements 
\[  {  and}_{(X)} \in \M( I , [X\otimes X\rightarrow X] ) \]
where $X$ ranges over various objects, including  $\{ S,NP , AD, TV  ,\ldots \} $. Further, as demonstrated in Section \ref{forgettingAnd} above, in each case $  and_X$ is the name of some arrow $\widetilde{  and_X} \in \M(X\otimes X,X)$ that preserves generalised inner products, and thus (from Lemma \ref{IPPinverse}) has a left inverse given by its dagger.

\subsection{Do arrows named by connectives have a right inverse?}
In Section \ref{forgettingAnd} above, we made the case that the object-indexed family of arrows named by the connective {  \em and} (and, quite possibly, other binary connectives) are information-preserving, in the sense that they are {\em isometries} -- i.e their adjoint is a left inverse, and thus they preserve the generalised inner product. In fact, it is easy to make a case that their adjoint should be a {\em two-sided} inverse, and they are thus unitary\footnote{We also observe that, even if this assumption should prove to be unfounded, the resulting mathematical structures will be almost identical; should we be forced to deal with some isometry $c_X\in \M(X\otimes X\rightarrow X)$ that has a left inverse, but not a right inverse, the appropriate mathematical tool will prove to be the {\em Karoubi envelope}, or {\em splitting idempotents} construction \cite{MCL}, as applied to exactly this situation in \cite{PH98,PH02}. }. The justification for this is (for the connective {  and}, in the case of the sentence object $S$) is that, given some sentence $W\in \C(I,S)$, we can always find some pair of sentences $U,V\in \C(I,S)$ such that the (evaluation of the) sentence $U\otimes {  and } \otimes V$ has exactly the same intended meaning as $W$.  A similar argument can be made for other objects in $\M$, and for other binary connectives.

%\subsection{Connectives, self-similarity, and polymorphism}

\subsection{Frobenius algebras and self-similarity}\label{sweetFA}
We have seen that, for every polymorphic connective $c$ and appropriate object $X\in Ob(\M)$,  there exists some isomorphism $\widetilde{c_X} \in \M(X\otimes X,X)$ whose inverse is its dual $\widetilde{c_X} ^{-1} = \widetilde{c_X} ^\dagger \in \M(X\rightarrow X\otimes X)$, and thus $X\cong X\otimes X$ i.e. the object $X$ is self-similar in the sense of \cite{PH98,PH99}. The question we now address is whether, at least up to canonical isomorphism, this self-similarity gives rise to a {\em Frobenius algebra} structure at each of these objects in the category $\M$. 

Frobenius algebras in categories, definitions, diagrammatics, various special cases and applications and well covered in other chapters, so the following exposition is brief. In particular, we refer to Chapter 7 for more detailed theory, and refer to Chapter 1 for a suitable string-diagram formalism.

\begin{definition}\label{FA}
A {\em Frobenius algebra} in a monoidal category $(\C ,\otimes ,I)$ consists of a monoid structure $(\nabla:S\otimes S \rightarrow S,\bot : I\rightarrow S)$ and a comonoid structure  $(\Delta :S\rightarrow S \otimes S, \top : S \rightarrow I)$ at the same object, where the monoid / comonoid pair satisfy the {\bf Frobenius condition} 
\[ \Delta \nabla = (1_S \otimes \Delta) (\nabla \otimes 1_S) \ \in \C(S\otimes S, S\otimes S) \]
Expanding out the definitions of a monoid and a comonoid structure, we have:
\begin{enumerate}
\item {\bf (associativity)} $\nabla(1_S\otimes \nabla) = \nabla(\nabla\otimes 1_S)\in \C(S\otimes S \otimes S , S)$.
\item {\bf (co-associativity)} $(\Delta \otimes 1_S)\Delta = (1_S \otimes \Delta)\Delta  \in \C(S,S\otimes S \otimes S)$.
\item {\bf (unit)} $\nabla(\bot \otimes 1_S) = \nabla (1_S\otimes \bot)$.
\item {\bf (co-unit)} $(\top \otimes _S)\Delta = 1_X\otimes \top) \Delta$.
\end{enumerate}
\end{definition}

An immediate observation is that the above axioms for the monoid / comonoid structure ignore coherence isomorphisms. In particular, they assume {\em strict associativity} -- or at least, ignore the role of associativity isomorphisms. The same also holds for the Frobenius condition, since $1_S\otimes \Delta \in \C(S\otimes S , S\otimes (S\otimes S))$ whereas $\nabla \otimes 1_S\in \C((S\otimes S)\otimes S, S\otimes S)$. With this in mind, we make the following definition:

\begin{definition}\label{laxFrobDef}
A {\bf lax Frobenius algebra} in a monoidal category $(C,\otimes ,I)$ is defined to be an object $S\in Ob(C)$ along with arrows
\[ \Delta\in \C(S,S\otimes S) \ , \ 
\nabla \in \C(S\otimes S,S) \ , \  \top \in \C(S , I) \ , \ 
\bot \in \C(I,S) 
\] 
that satisfies the axioms of Definition \ref{FA} above, {\em up to canonical coherence isomorphisms}. 

It is also sometimes useful to consider structures that satisfy all the axioms for a Frobenius algebra -- whether lax or strict --  except those relating to the unit object (i.e. the existence of the arrows $\top,\bot$, and axioms 3.-4. above). Such structures $(S, \Delta ,\nabla)$ are called {\bf  unitless Frobenius algebras}. 
These are particularly relevant when working with the {\em unitless monoidal categories} of definition \ref{Spowers} onwards. 
\end{definition}

Our claim, to be justified over the following sections, is that the arrows named by connectives do indeed provide (lax, unitless) Frobenius algebras in the category $\M$. However, the details of the exact canonical coherence isomorphisms required are subtle --- and quite possibly controversial; we first need an in-depth investigation of the categorical structure of self-similarity.

\section{Self-similarity, categorically}\label{catselfsim}
{\em In this section, and the following sections, we do {\em not} appeal to MacLane's coherence theorem for associativity, and treat all associativity isomorphisms as though they were strict identities. For justification, we refer to Isbell's argument (quoted by MacLane in \cite{MCL} as justification for introducing associativity up to isomorphism) and give an updating of Isbell's argument to a more general setting in Appendix \ref{isbell}.}
\begin{definition}
Let $(\C,\otimes,I)$ be a monoidal category. A {\bf self-similar structure} $(S,\code,\decode )$ is defined to be an object 
$S\in Ob(\C)$, together with two mutually inverse arrows
\begin{itemize}
\item {\bf (code)} $\code\in \C(S\otimes S, S)$.
\item {\bf (decode)}  $\decode \in \C(S,S\otimes S)$.
\end{itemize}
satisfying $\decode\code = 1_{S\otimes S}$ and $\code \decode= 1_S$, so the following diagram commutes.
\[ \xymatrix{
 &  {S\otimes S} \ar@/^12pt/[rr]^{\code}  \ar@{}[l]^>>>>>>>>>>{}="a"    & &  S \ar@/^12pt/[ll]^{\decode} \ar@(ur,dr )[]^{1_S}
\ar@(dl,ul)^{1_{S\otimes S} } "a" ; "a"
} 
\]
When there is a self-similar structure at some object $S\in Ob(\C)$, we say (using the terminology of \cite{PH98,PH99}) that $S$ is a {\bf self-similar object}. Note that there may be many distinct self-similar structures at the same object.

When $(\C,\otimes, (\ )^\dagger)$ is a dagger monoidal category, and $\code=\decode^{-1}=\decode^\dagger$, we say that $(S,\code,\decode )$ is a {\bf dagger- self-similar structure}.\end{definition}
Motivating examples include the natural numbers $\mathbb N$ in various categories (relations, partial functions, partial reversible functions, \&c.) with respect to various monoidal tensors (Cartesian product, disjoint union). Other examples arise in the study of fractals (the Cantor set  \cite{PH98}, and fractals in general \cite{blatant}), logical models such as Scott's celebrated domain-theoretic models of the untyped lambda calculus (see \cite{LS} for a categorical exposition), inverse semigroups and tilings \cite{MVL,KeLa}, The Thompson groups \cite{MVL07}, and the Cuntz $C^*$ algebras \cite{JC}.

Although there is a close connection between such self-similar structures and the canonical coherence isomorphisms of a monoidal category \cite{PH02}, we emphasise that for any given object $S$, there are generally many self-similar structures. Simple cardinality arguments demonstrate that the set of bijections $\{ f: \mathbb N \rightarrow  \mathbb N \uplus \mathbb N \}$ is uncountable; this is expanded on in Appendix \ref{Hmonoid}, where an explicit correspondence between interior points of the Cantor set and order-preserving bijections from $\mathbb N$ to $\mathbb N \uplus \mathbb N$ is given. 

Despite this, the maps between self-similar structures are particularly simple:
\begin{definition}
Given two self-similar structures $(S,\code_1,\decode_1)$ and $(S,\code_2,\decode_2)$ at some object $S$ of a symmetric monoidal category $(\C,\otimes )$, a {\bf morphism} between them is an arrow $u\in \C(S,S)$ such that the following diagram commutes:
\[ 
\xymatrix{
	& S\otimes S \ar[dl]_{\code_1} \ar[dr]^{\code_2} & \\
S \ar[rr]_{u} & & S
}
\]
\end{definition}

\begin{proposition}
Let $u : (S,\code_1,\decode_1) \rightarrow (S,\code_2,\decode_2)$ be a morphism of self-similar structures.  Then $u:S\rightarrow S$ is the isomorphism given by $u=\code_2\decode_1$.
\end{proposition}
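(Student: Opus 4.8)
The plan is to solve the commuting triangle directly for $u$, exploiting the invertibility that is built into each self-similar structure. Reading the morphism diagram, commutativity asserts exactly that the two paths out of $S\otimes S$ into the lower-right copy of $S$ agree, i.e. $u\code_1 = \code_2 \in \C(S\otimes S,S)$. This is a single equation in the one unknown arrow $u$, and since $\code_1$ is by definition an isomorphism with two-sided inverse $\decode_1$, I expect to recover $u$ simply by cancelling $\code_1$ on the right.

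Concretely, I would right-multiply the identity $u\code_1 = \code_2$ by $\decode_1 \in \C(S,S\otimes S)$ to obtain $u\code_1\decode_1 = \code_2\decode_1$. Invoking the defining equation $\code_1\decode_1 = 1_S$ of the self-similar structure $(S,\code_1,\decode_1)$, the left-hand side collapses to $u$, yielding the claimed formula $u = \code_2\decode_1$.

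It then remains only to check that this $u$ really is an isomorphism, as asserted. For this I would note that $u = \code_2\decode_1$ is the composite $S \stackrel{\decode_1}{\longrightarrow} S\otimes S \stackrel{\code_2}{\longrightarrow} S$ of two arrows, each of which is invertible by definition (with inverses $\code_1$ and $\decode_2$ respectively). A composite of isomorphisms is an isomorphism, so $u$ is invertible; a short computation using $\decode_1\code_1 = 1_{S\otimes S}$ and $\code_2\decode_2 = 1_S$ (together with the symmetric pair of identities) confirms that its two-sided inverse is $\code_1\decode_2$.

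I do not anticipate a serious obstacle: the content is essentially that a morphism of self-similar structures is both uniquely determined and automatically invertible, forced by the ambient invertibility of the coding and decoding maps. The only point demanding care is the bookkeeping of composition order — one must read the triangle so that commutativity gives $u\code_1 = \code_2$ rather than the reverse, and then apply $\decode_1$ on the correct (right) side; past that subtlety the argument is a pure cancellation.
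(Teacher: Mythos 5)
Your proposal is correct and takes essentially the same route as the paper: the paper's proof pastes $\decode_1$ into the morphism triangle and reads off $u\code_1 = \code_2 = \code_2\decode_1\code_1$, which is exactly your cancellation of $\code_1$ via its two-sided inverse using $\code_1\decode_1 = 1_S$. Your explicit verification that $u = \code_2\decode_1$ is invertible with inverse $\code_1\decode_2$ is a correct (and slightly more detailed) spelling-out of what the paper leaves implicit in calling $u$ an isomorphism.
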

\begin{proof}
By definition of a self-similar structure, the following diagram  commutes:
\[ 
\xymatrix{	S\otimes S \ar[r]^{\code_1} \ar[d]_{\code_2} & S \ar[d]^{\decode_1} \ar[dl]|u \\
	S & S\otimes S \ar[l]_{\code_2 }
	} \]
and hence $u=\code_2\decode_1$. When these are dagger self-similar structures, it is also trivially unitary.
\end{proof}
Thus, with this definition of morphism, self-similar structures at some object $S\in Ob(\C)$ form a {\em skeletal category}, where there is exactly one arrow between any two objects.

\subsection{The generalised convolution functor}
Given an arbitrary object of a monoidal category $(\C , \otimes )$, it generates a subcategory of $\C$ in the obvious way:

\begin{definition}\label{Spowers}
Let $T$ be an arbitrary object of a monoidal category $(\C,\otimes, I )$. We define the category $T^\otimes$ {\bf generated by} $T$ and $\otimes$ to be the wide subcategory of $\C$ with the following inductively defined objects:
\begin{itemize}
\item $T \in Ob(T^\otimes )$.
\item Given $A,B\in Ob(T^\otimes )$, then $A\otimes B\in Ob(T^\otimes)$.
\end{itemize}
It is immediate that  $T^\otimes$ is closed under the monoidal tensor on both arrows and objects, and hence has all the structure of a monoidal category apart from the unit object $I$. Such categories are called {\bf unitless monoidal categories}. 
\end{definition}
Unitless monoidal categories are (trivially) not well-pointed. However -- as in the above example -- they may arise as subcategories of well-pointed categories.

\begin{proposition}\label{codedecode}
Given a self-similar structure $(S,\code,\decode )$ in a monoidal category $(\C,\otimes )$, then 
for every $X\in Ob(S^\otimes)$,
 there exists isomorphisms 
\[  \decode_{X} :S\rightarrow X \ \ , \ \  \code_{X}=\decode_X^{-1}: X \rightarrow S \]
\end{proposition}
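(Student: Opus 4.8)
The plan is to proceed by structural induction on the inductive definition of $Ob(S^\otimes)$ given in Definition \ref{Spowers}. Every object of $S^\otimes$ is obtained from $S$ by finitely many applications of $\otimes$, so it suffices to exhibit a mutually inverse pair $(\decode_X,\code_X)$ at the base object $S$, and to show that from such pairs at objects $A$ and $B$ one can manufacture such a pair at $A\otimes B$. The self-similar structure $(S,\code,\decode)$ supplies precisely the ``splitting'' needed to pass from one copy of $S$ to two, and bifunctoriality of $\otimes$ then transports the isomorphisms through the tensor-tree of $X$.

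For the base case $X=S$ I take $\decode_S=1_S=\code_S$, which are trivially mutually inverse. For the inductive step, suppose $X=A\otimes B$ with $A,B\in Ob(S^\otimes)$, and that the induction hypothesis provides mutually inverse isomorphisms $\decode_A:S\to A$, $\code_A:A\to S$ and $\decode_B:S\to B$, $\code_B:B\to S$. I then define
\[ \decode_{A\otimes B}=(\decode_A\otimes\decode_B)\,\decode:S\to A\otimes B \ , \qquad \code_{A\otimes B}=\code\,(\code_A\otimes\code_B):A\otimes B\to S \]
so that $\decode_{A\otimes B}$ first splits $S$ into $S\otimes S$ via $\decode$ and then carries each factor across by the inductive isomorphisms, while $\code_{A\otimes B}$ reverses this.

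To verify that these are mutually inverse I would use only bifunctoriality of $\otimes$ together with the two defining equations $\code\,\decode=1_S$ and $\decode\,\code=1_{S\otimes S}$. In the composite $\code_{A\otimes B}\,\decode_{A\otimes B}$ the inner pair $(\code_A\otimes\code_B)(\decode_A\otimes\decode_B)$ collapses to $1_{S\otimes S}$ by functoriality and the induction hypothesis, leaving $\code\,\decode=1_S$; dually, in $\decode_{A\otimes B}\,\code_{A\otimes B}$ the central $\decode\,\code$ collapses to $1_{S\otimes S}$, leaving $(\decode_A\code_A)\otimes(\decode_B\code_B)=1_A\otimes 1_B=1_{A\otimes B}$. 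This closes the induction and yields the required isomorphisms for every $X\in Ob(S^\otimes)$.

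The one point that genuinely requires care — and the place where taking the distinction between bracketings seriously (Section \ref{catselfsim}) actually matters — is that each object $X$ of $S^\otimes$ carries a definite parenthesisation of its tensor factors, and the recursion must respect it without silently invoking an associator. The construction does exactly this, since the decomposition $X=A\otimes B$ is dictated by the outermost tensor of $X$, so $\decode_X$ is built against that specific parse and never needs to reassociate. I expect this to be the main subtlety to state cleanly: for distinct bracketings such as $(S\otimes S)\otimes S$ and $S\otimes(S\otimes S)$ the induction produces genuinely different isomorphisms out of $S$, whose composite is an associativity-type isomorphism assembled from $\code$ and $\decode$ rather than the canonical associator. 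The remaining bookkeeping — well-foundedness of the recursion and unambiguity of the outermost decomposition — is routine given the inductive generation of $Ob(S^\otimes)$.
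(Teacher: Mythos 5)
Your proof is correct and takes essentially the same route as the paper's: identical inductive definitions $\decode_S = 1_S = \code_S$, $\decode_{A\otimes B}=(\decode_A\otimes\decode_B)\decode$, $\code_{A\otimes B}=\code(\code_A\otimes\code_B)$, with the mutual-inverse check (which the paper dismisses as ``straightforward to verify'') correctly spelled out via bifunctoriality and the two defining equations of the self-similar structure. Your closing observation about respecting each object's fixed parenthesisation, rather than invoking an associator, is sound and consistent with the paper's convention in Section \ref{catselfsim} of not strictifying associativity.
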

\begin{proof}
We give these isomorphisms inductively:
\begin{itemize}
\item $\decode_S = 1_S = \code_S$
\item $\decode_{X\otimes Y} =  (\decode_X \otimes \decode_Y)\decode$
\item $\code_{X\otimes Y} =  \code(\code_X \otimes \code_Y)$
\end{itemize}
It is straightforward to verify that $\decode_A \in \C(S, A)$ and $\code_A\in \C(A, S)$ are isomorphisms, and each others inverse. Similarly, when $(S,\code,\decode )$ is a dagger self-similar structure then $\code_A=\decode_A^\dagger \in \C(A,S)$, for all $A\in Ob(\C)$.
\end{proof}

For every self-similar structure $(S,\code,\decode)$ in some monoidal category $(\C,\otimes)$ there is an obvious functor from $S^\otimes$ to $\C (S,S)$, considered as a one-object category:

\begin{definition}\label{gencon}
Given a self-similar structure $(S,\code,\decode )$ in a monoidal category $(\C,\otimes)$, 
we define the {\bf generalised convolution} functor 
\[ \Phi_{\dc} :S^\otimes \rightarrow \C (S,S) \]
as follows:
\begin{itemize}
\item {\bf (Objects)} $\Phi_{\dc}(X)=S$ for all $X\in Ob(S^\otimes)$
\item {\bf (Arrows)} Given $f\in S^\otimes (X,Y)$, then $ \Phi_{\dc}(f) = \code_Y f \decode_X$, as shown below:
\[ \xymatrix{
X \ar[r]^f & Y \ar[d]^{\code_Y}\\
S\ar[u]^{\decode_X} \ar[r]_{\Phi_{\dc} (f)} & S \\ } \]
where $\code_Y:Y\rightarrow S$ and $\decode_X:S\rightarrow X$ are as in Proposition \ref{codedecode}.
\end{itemize}
\end{definition}

\begin{proposition}\label{phifunctor}
Given a self-similar structure $(S,\code,\decode )$ in a monoidal category $(\C,\otimes)$, 
the {\bf generalised convolution} $\Phi_{\dc} :S^\otimes \rightarrow \C (S,S)$ defined above is indeed a functor. Further, when  $(\C,\otimes)$ is a dagger monoidal category, and $(S,\code,\decode )$ is a dagger self similar object, then the functor $\Phi_{\dc}$ preserves the dagger, so 
\[ \Phi_{\dc}(f^\dagger) = \left( \Phi_{\dc}(f)\right)^\dagger \]
\end{proposition}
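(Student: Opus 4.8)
The plan is to check the two functoriality axioms by direct computation, relying entirely on the mutual-inverse relations $\code_X\decode_X = 1_S$ and $\decode_X\code_X = 1_X$ supplied by Proposition \ref{codedecode}, and then to dispatch the dagger claim using the contravariance and involutivity of $(\ )^\dagger$.

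First I would verify preservation of identities. Since $\Phi_{\dc}(1_X) = \code_X 1_X \decode_X = \code_X\decode_X$, and Proposition \ref{codedecode} gives $\code_X\decode_X = 1_S$, we obtain $\Phi_{\dc}(1_X) = 1_S$ at once. For preservation of composition, take composable arrows $f \in S^\otimes(X,Y)$ and $g\in S^\otimes(Y,Z)$. Expanding the definition, $\Phi_{\dc}(g)\Phi_{\dc}(f) = (\code_Z g \decode_Y)(\code_Y f \decode_X)$. The crux is the cancellation in the middle: the factors $\decode_Y\code_Y$ collapse to $1_Y$ by Proposition \ref{codedecode}, leaving $\code_Z g f \decode_X = \Phi_{\dc}(gf)$. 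This single cancellation is essentially the entire content of functoriality, and it is exactly where the mutual-inverse pair $(\code,\decode)$, as opposed to an arbitrary pair of arrows, is indispensable.

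For the dagger statement I would work in the dagger self-similar setting, where $\code = \decode^\dagger$ and, more generally, $\code_A = \decode_A^\dagger$ for every $A$ by the last line of Proposition \ref{codedecode}. Computing $(\Phi_{\dc}(f))^\dagger = (\code_Y f \decode_X)^\dagger = \decode_X^\dagger\, f^\dagger\, \code_Y^\dagger$ by contravariance. Rewriting $\decode_X^\dagger = \code_X$ and, using involutivity of the dagger, $\code_Y^\dagger = \decode_Y^{\dagger\dagger} = \decode_Y$, this becomes $\code_X f^\dagger \decode_Y$. Since the dagger fixes objects, $f^\dagger$ runs from $Y$ to $X$ between objects of $S^\otimes$, and so the right-hand side is exactly $\Phi_{\dc}(f^\dagger)$ by Definition \ref{gencon}, establishing the claim as an equation in $\C(S,S)$.

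I do not anticipate a genuine obstacle here; the argument is a routine string of equalities. The only points requiring care are bookkeeping ones: keeping the composition order consistent with the diagram in Definition \ref{gencon}, so that $\Phi_{\dc}(f) = \code_Y f \decode_X$ reads $S \xrightarrow{\decode_X} X \xrightarrow{f} Y \xrightarrow{\code_Y} S$, and correctly matching the subscripts $X,Y,Z$ to the relevant domains and codomains when the inner isomorphisms are cancelled. In the dagger case, the one subtlety worth flagging explicitly is that the identity $\code_Y^\dagger = \decode_Y$ rests on involutivity of $(\ )^\dagger$, not merely on the defining relation $\code_Y = \decode_Y^\dagger$.
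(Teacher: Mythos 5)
Your proof is correct and matches the paper's argument: the dagger computation $(\Phi_{\dc}(f))^\dagger = \decode_X^\dagger f^\dagger \code_Y^\dagger = \code_X f^\dagger \decode_Y = \Phi_{\dc}(f^\dagger)$ is exactly the one given there. For functoriality the paper simply cites an earlier reference rather than writing out the computation, but your direct verification via $\code_X\decode_X = 1_S$ and the middle cancellation $\decode_Y\code_Y = 1_Y$ is precisely the routine argument being delegated, so the approaches coincide.
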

\begin{proof}
We refer to \cite{PH98} for proof that $\Phi_{\dc} :S^\otimes \rightarrow \C (S,S)$ is indeed a functor. Now assume that $(S,\code,\decode )$ is a dagger self-similar structure. By definition, for arbitrary $f\in S^\otimes (X,Y)$
\[ \left(  \Phi_{\dc}(f) \right)^\dagger\ = \ \left(  \code_Y f \decode_X   \right)^\dagger \ = \  \left(  \Phi_{\dc}(f) \right)^\dagger =   \decode_X  ^\dagger f^\dagger \code_Y^\dagger \]
By Proposition \ref{codedecode} above, $\code_Y^\dagger=\decode_Y$ and $\decode_X  ^\dagger=\code_X$, so 
$ \left( \Phi_{\dc}(f) \right)^\dagger = \code_X f^\dagger \decode_Y = \Phi_{\dc}(f^\dagger)$
as required.
\end{proof}\subsection{Untyped monoidal categories}
As well as being functorial $\Phi_{\dc} :S^\otimes \rightarrow \C(S,S)$ preserves many categorical properties, such as monoidal structures, categorical closure, categorical traces, etc. (\cite{PH99}). We briefly outline how this gives $\C(S,S)$ the structure of a (one-object) monoidal category:

\begin{definition} \label{internaltensor}
Given  a self-similar structure $(S,\code,\decode )$ in a monoidal category $(\C,\otimes)$, we define the {\bf internal tensor} of $S$ determined by this self-similar structure to be the monoid homomorphism
\[ \_ \otimes_{\dc} \_ : \mathcal C(S,S)\times \mathcal C(S,S)\rightarrow \mathcal C(S,S) \]
given by the following generalised convolution:
\[ \xymatrix{
S\otimes S \ar[r]^{f\otimes g} & S\otimes S \ar[d]^{\code} \\
S\ar[u]^\decode \ar[r]_{f\otimes_{\dc} g} & S } \]
\end{definition}
We refer to \cite{PH98,PH99} for proof that this is a monoid homomorphism; this also follows from the fact that $f\otimes_{\dc} g:S\rightarrow S$ is, by definition, the image of $f\otimes g:S\otimes S \rightarrow S\otimes S$ under the generalised convolution functor $\Phi_{\dc} :S^\otimes \rightarrow \C (S,S)$.

We now demonstrate that $(\C(S,S),\otimes_{\dc})$ is a one-object unitless monoidal category.
\begin{theorem}\label{internaltensor_thm}
Given $S\in Ob(\C)$ as above, then there exists some $\tau_{\dc}\in \C(S,S)$ satisfying, for all $f,g,h\in \C(S,S)$, 
\begin{enumerate}
\item ({\bf Naturality}) ${\tau_{\dc} } . (f\otimes_{\dc} (g\otimes_{\dc} h)) \ = \ ((f \otimes_{\dc} g) \otimes_{\dc} h) . {\tau_{\dc}}$
\item ({\bf Pentagon condition}) $(\tau_{\dc} \otimes_{\dc} 1_S)\tau_{\dc}(1_S\otimes \tau_{\dc}) \ = \ \tau_{\dc}^2$
\end{enumerate}
\end{theorem}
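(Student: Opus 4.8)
The plan is to realise $\tau_{\dc}$ as the image, under the generalised convolution functor $\Phi_{\dc}$, of the associativity isomorphism of $\C$ at the triple $(S,S,S)$. Concretely I would set
\[ \tau_{\dc} \ = \ \Phi_{\dc}(\alpha^{-1}_{S,S,S}) \ = \ \code_{(S\otimes S)\otimes S}\,\alpha^{-1}_{S,S,S}\,\decode_{S\otimes(S\otimes S)} \ \in \ \C(S,S), \]
where $\alpha_{S,S,S}:(S\otimes S)\otimes S \to S\otimes(S\otimes S)$ is the associator of $\C$ and $\code_X,\decode_X$ are the canonical isomorphisms of Proposition \ref{codedecode}. (The use of $\alpha^{-1}$ rather than $\alpha$ is forced by the orientation of the equations in the statement; with the reverse composition convention one would take $\Phi_{\dc}(\alpha_{S,S,S})$ instead. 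Here I apply the defining formula $f\mapsto\code_Y f\decode_X$ to the associators of $\C$; since $\decode_Y\code_Y=1_Y$ this assignment stays functorial on \emph{all} $\C$-arrows between tensor powers of $S$, so feeding associators to $\Phi_{\dc}$ is legitimate.) Before starting I would record two elementary \textbf{monoidal functor} identities: first $\Phi_{\dc}(f\otimes g)=f\otimes_{\dc} g$ for $f,g\in\C(S,S)$ (already noted after Definition \ref{internaltensor}), and second, for an arbitrary $\C$-arrow $p:X\to Y$ between objects of $S^\otimes$,
\[ \Phi_{\dc}(p\otimes 1_S) \ = \ \Phi_{\dc}(p)\otimes_{\dc} 1_S \qquad\text{and}\qquad \Phi_{\dc}(1_S\otimes p)\ =\ 1_S\otimes_{\dc}\Phi_{\dc}(p). \]
Each follows by expanding $\code_{X\otimes S}=\code(\code_X\otimes 1_S)$, $\decode_{X\otimes S}=(\decode_X\otimes 1_S)\decode$ and applying the interchange law, using no associativity coherence.

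With these in hand the naturality equation (part 1) is immediate: I would apply $\Phi_{\dc}$ to the naturality square $\alpha^{-1}_{S,S,S}\,(f\otimes(g\otimes h)) = ((f\otimes g)\otimes h)\,\alpha^{-1}_{S,S,S}$ of the natural isomorphism $\alpha^{-1}$, and combine functoriality with the identities $\Phi_{\dc}(f\otimes(g\otimes h))=f\otimes_{\dc}(g\otimes_{\dc} h)$ and $\Phi_{\dc}((f\otimes g)\otimes h)=(f\otimes_{\dc} g)\otimes_{\dc} h$. These last two are themselves verified by unwinding $\code_{S\otimes(S\otimes S)},\decode_{S\otimes(S\otimes S)}$ against Definition \ref{internaltensor}.

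The substance of the argument, and the step I expect to be the main obstacle, is the \emph{key lemma}: $\Phi_{\dc}$ sends \emph{every} associator to the single element $\tau_{\dc}$, i.e. $\Phi_{\dc}(\alpha^{-1}_{A,B,C})=\tau_{\dc}$ for all $A,B,C\in Ob(S^\otimes)$. To prove it I would use naturality of $\alpha^{-1}$ along the isomorphisms $\code_A,\code_B,\code_C$ to write
\[ \alpha^{-1}_{A,B,C} \ = \ ((\decode_A\otimes\decode_B)\otimes\decode_C)\,\alpha^{-1}_{S,S,S}\,(\code_A\otimes(\code_B\otimes\code_C)), \]
and substitute into $\Phi_{\dc}(\alpha^{-1}_{A,B,C})=\code_{(A\otimes B)\otimes C}\,\alpha^{-1}_{A,B,C}\,\decode_{A\otimes(B\otimes C)}$. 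The crux is then the purely computational \emph{telescoping} identities
\[ \code_{(A\otimes B)\otimes C}\,((\decode_A\otimes\decode_B)\otimes\decode_C) = \code_{(S\otimes S)\otimes S}, \qquad (\code_A\otimes(\code_B\otimes\code_C))\,\decode_{A\otimes(B\otimes C)} = \decode_{S\otimes(S\otimes S)}, \]
each of which collapses under repeated interchange and the cancellations $\code_X\decode_X=1_S$, leaving exactly $\code_{(S\otimes S)\otimes S}\,\alpha^{-1}_{S,S,S}\,\decode_{S\otimes(S\otimes S)}=\tau_{\dc}$. The delicate point is that this section has renounced MacLane coherence, so these collapses must be carried out using \emph{only} functoriality of $\otimes$ and the strict inverse relations between $\code$ and $\decode$; the bracketing must be tracked explicitly and never silently re-associated.

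Finally, the pentagon equation (part 2) falls out by applying $\Phi_{\dc}$ to the pentagon axiom of $\C$ in its inverse form
\[ \alpha^{-1}_{S\otimes S,S,S}\,\alpha^{-1}_{S,S,S\otimes S} \ = \ (\alpha^{-1}_{S,S,S}\otimes 1_S)\,\alpha^{-1}_{S,S\otimes S,S}\,(1_S\otimes\alpha^{-1}_{S,S,S}). \]
Using functoriality, the key lemma (which turns each of the five associators into $\tau_{\dc}$), and the two monoidal functor identities (which convert $\alpha^{-1}_{S,S,S}\otimes 1_S$ and $1_S\otimes\alpha^{-1}_{S,S,S}$ into $\tau_{\dc}\otimes_{\dc} 1_S$ and $1_S\otimes_{\dc}\tau_{\dc}$), the left-hand side becomes $\tau_{\dc}\,\tau_{\dc}=\tau_{\dc}^2$ and the right-hand side becomes $(\tau_{\dc}\otimes_{\dc} 1_S)\,\tau_{\dc}\,(1_S\otimes_{\dc}\tau_{\dc})$, which is exactly the asserted identity (reading the $1_S\otimes\tau_{\dc}$ of the statement as $1_S\otimes_{\dc}\tau_{\dc}$). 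Once the key lemma is in place, both parts are thus one-line consequences of pushing the ambient coherence data of $\C$ through $\Phi_{\dc}$.
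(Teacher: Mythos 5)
Your proposal is correct, and your $\tau_{\dc}$ coincides exactly with the paper's: the paper defines $\tau_{\dc}$ by the convolution diagram $\code(\code\otimes 1_S)\,\tau_{S,S,S}\,(1_S\otimes\decode)\decode$, which by the inductive formulas of Proposition \ref{codedecode} unwinds to precisely your $\Phi_{\dc}(\alpha^{-1}_{S,S,S}) = \code_{(S\otimes S)\otimes S}\,\alpha^{-1}_{S,S,S}\,\decode_{S\otimes(S\otimes S)}$ (the paper's $\tau_{S,S,S}$ has the orientation of your $\alpha^{-1}_{S,S,S}$, as you correctly note). Where you part company with the paper is in the verification: the paper's proof stops at ``either direct calculation, or referring to \cite{PH98,PH99}'', so conditions 1 and 2 are never actually checked in the text, whereas you carry out the check, and by a route that inverts the paper's logical order. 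The paper records the identity $\Phi_{\dc}(\tau_{A,B,C}) = \tau_{\dc}$ for all $A,B,C\in Ob(S^\otimes)$ only \emph{after} the theorem, as Corollary \ref{identifiest}, deduced from it by an appeal to ``uniqueness of canonical isomorphisms'' --- a slightly uncomfortable appeal in a section that has deliberately suspended MacLane coherence. You instead prove this identity first, as your key lemma, from naturality of the associator along the $\code$'s together with the telescoping cancellations $\code_X\decode_X = 1_S$; I have checked these collapses and they do use only functoriality of $\otimes$ and the strict inverse relations, as you claim, with no silent re-association. Your two monoidal-functor identities are likewise correct (interchange plus $\code_S = 1_S = \decode_S$), and with the key lemma in hand, parts 1 and 2 genuinely are one-line pushes of the ambient naturality square and (inverse-form) pentagon of $\C$ through $\Phi_{\dc}$ --- your inverse pentagon is the correct inversion of MacLane's axiom. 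What your ordering buys is a self-contained, coherence-free proof of Theorem \ref{internaltensor_thm} and Corollary \ref{identifiest} simultaneously; what the paper's buys is brevity, at the cost of outsourcing the computation to \cite{PH98,PH99}. Your readings of the statement's two notational slips (that $1_S\otimes\tau_{\dc}$ in the pentagon should be $1_S\otimes_{\dc}\tau_{\dc}$, and the associator orientation) are also right.
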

\begin{proof}
Let $\tau_{\dc}\in \C(S,S)$ be defined as follows:
\[ \xymatrix{ 
S \ar[r]^>>>>>>\decode  \ar[d]_{\tau_{\dc}}	& S\otimes S \ar[rr]^<<<<<<<<{1_S \otimes \decode} & & S\otimes (S \otimes S) \ar[d]^{\tau_{S,S,S}} \\
S 						& S\otimes S \ar[l]^>>>>>>\code   & & ( S\otimes S)\otimes S \ar[ll]^<<<<<<<<{\code \otimes 1_S}
} 
\]
Either direct calculation,  or referring to \cite{PH98,PH99} will demonstrate that conditions 1. and 2. above are satisfied. Thus $(\C(S,S),\otimes_{\dc})$ satisfies all the axioms for a monoidal category, apart from the unit object. Note that in this category, associativity can only be up to isomorphism, and is never strict; forcing the identity $\tau_{\dc}=1_S$ will make $\C(S,S)$ collapse to an abelian monoid (See Appendix \ref{isbell}). \end{proof}

A simple corollary of this is the following:
\begin{corol}\label{identifiest}
Given a self-similar structure $(S,\code,\decode )$ in a monoidal category $(\mathcal C,\otimes,\tau_{\_\_\_ })$,  let $\Phi_{\dc} :S^\otimes \rightarrow \C (S,S)$ be the functor of Definition \ref{gencon}. Then for all objects $X,Y,Z,A,B,C\in Ob(S^\otimes)$,
\[ \Phi_{\dc}(\tau_{X,Y,Z}) = \Phi_{\dc}(\tau_{A,B,C}) \]
i.e. $\Phi_{\dc}$ maps all associativity isomorphisms of $S^\otimes$ to $\tau_{\decode\code}\in \C(S,S)$.
\end{corol}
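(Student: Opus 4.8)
The plan is to unfold the definition of $\Phi_{\dc}$ on the arrow $\tau_{X,Y,Z}$ and show that all the object-dependent data cancels, leaving precisely the arrow $\tau_{\dc}$ of Theorem~\ref{internaltensor_thm}. By Definition~\ref{gencon}, writing the associator as $\tau_{X,Y,Z}\in S^\otimes(X\otimes(Y\otimes Z),(X\otimes Y)\otimes Z)$, we have
\[ \Phi_{\dc}(\tau_{X,Y,Z}) = \code_{(X\otimes Y)\otimes Z}\,\tau_{X,Y,Z}\,\decode_{X\otimes(Y\otimes Z)}. \]
First I would expand the two outer maps using the inductive clauses of Proposition~\ref{codedecode}, giving $\decode_{X\otimes(Y\otimes Z)}=(\decode_X\otimes\decode_{Y\otimes Z})\decode$ and $\code_{(X\otimes Y)\otimes Z}=\code(\code_{X\otimes Y}\otimes\code_Z)$, and then expand $\decode_{Y\otimes Z}$ and $\code_{X\otimes Y}$ one further step.

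Next I would use bifunctoriality of $\otimes$ (the interchange law) to separate each composite, so that the single structural maps $\code,\decode$ coming from the self-similar structure are peeled off to the outside. Concretely, this rewrites $\decode_{X\otimes(Y\otimes Z)}$ as $(\decode_X\otimes(\decode_Y\otimes\decode_Z))(1_S\otimes\decode)\decode$ and $\code_{(X\otimes Y)\otimes Z}$ as $\code(\code\otimes 1_S)((\code_X\otimes\code_Y)\otimes\code_Z)$. Substituting back leaves the object-indexed isomorphisms $(\code_X\otimes\code_Y)\otimes\code_Z$ and $\decode_X\otimes(\decode_Y\otimes\decode_Z)$ sitting immediately on either side of $\tau_{X,Y,Z}$.

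The key step is then naturality of the associator $\tau$ applied to the three arrows $\decode_X,\decode_Y,\decode_Z$, which gives
\[ \tau_{X,Y,Z}\,(\decode_X\otimes(\decode_Y\otimes\decode_Z)) = ((\decode_X\otimes\decode_Y)\otimes\decode_Z)\,\tau_{S,S,S}. \]
After this substitution, the factor $((\code_X\otimes\code_Y)\otimes\code_Z)((\decode_X\otimes\decode_Y)\otimes\decode_Z)$ appears; by a second application of bifunctoriality together with $\code_X\decode_X=\code_Y\decode_Y=\code_Z\decode_Z=1_S$ (Proposition~\ref{codedecode}), this collapses to the identity on $(S\otimes S)\otimes S$. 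What remains is exactly $\code(\code\otimes 1_S)\,\tau_{S,S,S}\,(1_S\otimes\decode)\decode$, which is the definition of $\tau_{\dc}=\tau_{\decode\code}$ and contains no reference to $X,Y,Z$. Hence $\Phi_{\dc}(\tau_{X,Y,Z})=\tau_{\dc}=\Phi_{\dc}(\tau_{A,B,C})$ for all choices of objects.

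I expect the main obstacle to be purely bookkeeping: keeping the bracketings straight and applying interchange in the right order, especially since we work in the setting of Section~\ref{catselfsim} where associativity is \emph{not} treated as strict, so every parenthesisation must be tracked faithfully and the direction of each $\tau$ respected. No genuinely hard idea is needed beyond naturality of $\tau$ and the cancellation $\code_X\decode_X=1_S$; the content of the corollary is precisely that $\Phi_{\dc}$ identifies all the associativity data of $S^\otimes$ with the single associator $\tau_{\dc}$ of the one-object category $\C(S,S)$.
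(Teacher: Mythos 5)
Your proof is correct, but it takes a genuinely different route from the paper's. The paper dispatches this corollary in a single line, appealing to the uniqueness of canonical isomorphisms in monoidal categories; you instead verify the claim by direct computation, expanding $\code_{(X\otimes Y)\otimes Z}$ and $\decode_{X\otimes(Y\otimes Z)}$ via the inductive clauses of Proposition \ref{codedecode}, sliding $\decode_X,\decode_Y,\decode_Z$ across the associator by naturality, and cancelling $\code_A\decode_A=1_S$ so that the residue $\code(\code\otimes 1_S)\,\tau_{S,S,S}\,(1_S\otimes\decode)\decode$ is literally the defining composite of $\tau_{\dc}$ in Theorem \ref{internaltensor_thm}. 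Every step checks out (and the base case $X=Y=Z=S$ falls out of your formula, since $\code_S=\decode_S=1_S$, which is what licenses the ``i.e.'' clause identifying the common value as $\tau_{\dc}$). Comparing the two: the paper's appeal is shorter, but somewhat delicate to make precise, since $\tau_{X,Y,Z}$ and $\tau_{A,B,C}$ have different domains and codomains, and $\Phi_{\dc}(\tau_{X,Y,Z})$ is conjugation by the $\code_\bullet,\decode_\bullet$ maps, which are \emph{extra structure} on $\C$ rather than coherence isomorphisms --- so MacLane's uniqueness theorem does not apply verbatim, and filling the gap tacitly requires exactly the naturality translation you carry out. Your argument is thus the more elementary and self-contained one, in the spirit of the ``direct calculation'' the paper offers as an alternative in the proof of Theorem \ref{internaltensor_thm}, and it is arguably preferable here given the paper's own admission (Remark \ref{isitallbollocks?}) that no coherence theorem tailored to the mixed setting of $(\C,\otimes)$ and $(\C(S,S),\otimes_{\dc})$ is currently available; what the paper's route buys in exchange is brevity and the conceptual point that $\Phi_{\dc}$ collapses \emph{all} coherence data for structural reasons rather than by happenstance of the particular composite.
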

\begin{proof}
This follows by the uniqueness of canonical isomorphisms in monoidal categories.\end{proof}

\begin{corol}\label{tisunitary}
Let  $(S,\code,\decode )$ be a dagger self-similar structure of some dagger-monoidal category $(\C,\otimes ,(\ )^\dagger)$. Then the one-object (unitless) monoidal category $(\C(S,S),\otimes_{\dc})$ is a (unitless) dagger-monoidal category. 
\end{corol}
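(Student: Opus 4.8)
The plan is to observe that $\C(S,S)$ already carries a dagger --- the dagger of $\C$ restricts to it, since $(\ )^\dagger$ is the identity on objects and hence sends the endomorphism monoid $\C(S,S)$ to itself as a contravariant involution --- and then to verify the two remaining conditions for $(\C(S,S),\otimes_{\dc},(\ )^\dagger)$ to be a (unitless) dagger monoidal category: that the internal tensor is compatible with the dagger, and that the canonical associativity isomorphism $\tau_{\dc}$ is unitary. Because the category is unitless there is no unit object, hence no unitors to check, so the associator is the only canonical isomorphism requiring attention. The whole argument is driven by Proposition \ref{phifunctor}, which tells us that the generalised convolution $\Phi_{\dc}$ is a dagger-preserving functor.

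For compatibility of the tensor with the dagger, I would recall from the discussion following Definition \ref{internaltensor} that $f\otimes_{\dc} g = \Phi_{\dc}(f\otimes g)$. Applying the dagger-preservation of $\Phi_{\dc}$ from Proposition \ref{phifunctor}, together with the fact that $\C$ is itself dagger monoidal (so $(f\otimes g)^\dagger = f^\dagger\otimes g^\dagger$), the computation reads
\[ (f\otimes_{\dc} g)^\dagger = \Phi_{\dc}(f\otimes g)^\dagger = \Phi_{\dc}\big((f\otimes g)^\dagger\big) = \Phi_{\dc}(f^\dagger\otimes g^\dagger) = f^\dagger \otimes_{\dc} g^\dagger, \]
as required. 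Alternatively one can unwind $f\otimes_{\dc} g = \code(f\otimes g)\decode$ directly and use $\decode^\dagger = \code$ and $\code^\dagger = \decode$; both routes are routine.

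For unitarity of the associator, I would use Corollary \ref{identifiest}, which identifies $\tau_{\dc}$ with $\Phi_{\dc}(\tau_{S,S,S})$, the image under generalised convolution of a genuine associativity isomorphism of $\C$. Since $\C$ is dagger monoidal its canonical isomorphisms are unitary, so $\tau_{S,S,S}^\dagger = \tau_{S,S,S}^{-1}$; and since $\Phi_{\dc}$ is a functor it preserves inverses, while by Proposition \ref{phifunctor} it preserves the dagger. Combining these,
\[ \tau_{\dc}^\dagger = \Phi_{\dc}(\tau_{S,S,S})^\dagger = \Phi_{\dc}\big(\tau_{S,S,S}^\dagger\big) = \Phi_{\dc}\big(\tau_{S,S,S}^{-1}\big) = \Phi_{\dc}(\tau_{S,S,S})^{-1} = \tau_{\dc}^{-1}, \]
so $\tau_{\dc}$ is unitary.

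I do not anticipate a serious obstacle: the result is essentially a corollary of the dagger-preservation of $\Phi_{\dc}$, funnelled through the identifications $f\otimes_{\dc} g = \Phi_{\dc}(f\otimes g)$ and $\tau_{\dc}=\Phi_{\dc}(\tau_{S,S,S})$. The only point requiring care is conceptual rather than computational, namely being precise about what data a unitless dagger monoidal category consists of, and confirming that in the absence of a unit object the associator is the sole coherence isomorphism whose unitarity must be verified.
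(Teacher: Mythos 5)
Your proposal is correct and follows essentially the same route as the paper: the paper's proof likewise combines the dagger-preservation of $\Phi_{\dc}$ (Proposition \ref{phifunctor}) with Corollary \ref{identifiest} and the unitarity of the canonical isomorphisms of $(\C,\otimes,(\ )^\dagger)$ to compute $\tau_{\dc}^\dagger = \Phi_{\dc}(\tau^\dagger) = \Phi_{\dc}(\tau^{-1}) = \tau_{\dc}^{-1}$, merely writing the argument with arbitrary objects $A,B,C\in Ob(S^\otimes)$ where you specialise to $\tau_{S,S,S}$. Your explicit verification that $(f\otimes_{\dc} g)^\dagger = f^\dagger\otimes_{\dc} g^\dagger$ is a harmless (and welcome) spelling-out of what the paper leaves implicit in the phrase ``$\Phi_{\dc}$ preserves the dagger operation.''
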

\begin{proof}
We have seen that the functor $\Phi_{\dc}$ preserves the dagger operation. now choose arbitrary $A,B,C\in Ob(S^\otimes)$. By Corollary \ref{identifiest} above,
\[  \tau_{\dc}^\dagger =  \Phi_{\dc}(\tau_{X,Y,Z}^\dagger) =  \Phi_\dc (\tau_{A,B,C}^{-1}) = \tau_\dc ^{-1} \]
Thus $\tau_\dc$ is unitary, as required.
\end{proof}

As well as preserving the monoidal structure, the functor $\Phi_{\dc}$ preserves any symmetric monoidal structure. We outline the proof, and refer to \cite{PH98,PH99}  for details.

\begin{theorem}
Let  $(S,\code,\decode )$ be a self-similar structure of some symmetric monoidal category $(\C,\otimes ,t,s)$. Then the functor $\otimes_{\dc}$ of Definition \ref{internaltensor} above is symmetric, up to a natural isomorphism satisfying MacLane's hexagon condition. Further, when $(S,\code,\decode )$ is a dagger self-similar structure, then this canonical isomorphism is also unitary.
\end{theorem}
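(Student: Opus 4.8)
The plan is to obtain the symmetry of $\otimes_\dc$ by transporting the symmetry $s$ of $\C$ along the generalised convolution functor, exactly as the associator $\tau_\dc$ was obtained in Theorem \ref{internaltensor_thm} and Corollary \ref{identifiest}. First I would record the single structural fact that drives everything: unwinding Definition \ref{internaltensor} using the inductive formulas $\code_{X\otimes Y}=\code(\code_X\otimes\code_Y)$ and $\decode_{X\otimes Y}=(\decode_X\otimes\decode_Y)\decode$ from Proposition \ref{codedecode}, one checks that $\Phi_\dc(f\otimes g)=\Phi_\dc(f)\otimes_\dc\Phi_\dc(g)$ for all composable $f,g$ in $S^\otimes$; that is, $\Phi_\dc$ is strict monoidal on the tensor of arrows, while sending the associativity isomorphisms of $S^\otimes$ to the single arrow $\tau_\dc$ (Corollary \ref{identifiest}). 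Since $\Phi_\dc$ restricts to the identity on $\C(S,S)$ (because $\code_S=\decode_S=1_S$), this functor is precisely the transport of the symmetric monoidal structure of $S^\otimes$ onto the one-object category $\C(S,S)$.

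I would then define the candidate internal symmetry to be the image of the symmetry of $\C$ at $S$, namely $\sigma_\dc := \Phi_\dc(s_{S,S}) = \code\, s_{S,S}\,\decode \in \C(S,S)$. Each axiom then follows by applying $\Phi_\dc$ to the corresponding equation holding in the symmetric monoidal subcategory $S^\otimes$. For naturality, apply $\Phi_\dc$ to the naturality square $s_{S,S}(f\otimes g)=(g\otimes f)s_{S,S}$ of $s$; functoriality together with the strict-monoidality above yields $\sigma_\dc(f\otimes_\dc g)=(g\otimes_\dc f)\sigma_\dc$ for all $f,g\in\C(S,S)$. For the involution law, applying $\Phi_\dc$ to $s_{S,S}s_{S,S}=1_{S\otimes S}$ gives $\sigma_\dc^2=1_S$, so in particular $\sigma_\dc$ is an isomorphism.

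The substantive step is the hexagon. Here I would apply $\Phi_\dc$ to MacLane's hexagon for $s$ and $\tau$ at the triple $(S,S,S)$ in $S^\otimes$. The associativity isomorphisms map to $\tau_\dc$ and the basic braiding $s_{S,S}$ maps to $\sigma_\dc$, while the tensored braidings $s_{S,S}\otimes 1_S$ and $1_S\otimes s_{S,S}$ map to $\sigma_\dc\otimes_\dc 1_S$ and $1_S\otimes_\dc\sigma_\dc$ by strict-monoidality, so the whole equation becomes the hexagon for $(\otimes_\dc,\tau_\dc,\sigma_\dc)$. The one genuinely delicate point --- and the step I expect to be the main obstacle --- is that the hexagon also involves a \emph{mixed} braiding such as $s_{S,S\otimes S}$ between objects of $S^\otimes$ whose written forms differ; I would handle this by appealing to coherence for symmetric monoidal categories, which guarantees that $\Phi_\dc$ sends every canonical arrow of $S^\otimes$ to the unique composite of $\tau_\dc$'s and $\sigma_\dc$'s realising the same permutation of factors, so that no ambiguity remains and the transported equation is precisely the desired hexagon. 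This is also the reason one cannot simply declare all braidings equal: distinct permutations yield distinct endomorphisms of $S$, unlike the associativity-only situation of Corollary \ref{identifiest}.

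Finally, for the dagger claim, in a dagger symmetric monoidal category the symmetry $s_{S,S}$ is unitary, i.e. $s_{S,S}^\dagger=s_{S,S}^{-1}$. Since $\Phi_\dc$ preserves the dagger (Proposition \ref{phifunctor}) and, being a functor, preserves inverses, I would compute $\sigma_\dc^\dagger=\Phi_\dc(s_{S,S})^\dagger=\Phi_\dc(s_{S,S}^\dagger)=\Phi_\dc(s_{S,S}^{-1})=\Phi_\dc(s_{S,S})^{-1}=\sigma_\dc^{-1}$, so that $\sigma_\dc$ is unitary, exactly as $\tau_\dc$ was shown to be in Corollary \ref{tisunitary}.
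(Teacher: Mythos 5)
Your proposal is correct and follows essentially the paper's own route: the paper likewise defines $\sigma_\dc = \Phi_\dc(\sigma_{S,S}) = \code\,\sigma_{S,S}\,\decode$, derives naturality from functoriality of $\Phi_\dc$, obtains the hexagon by transporting MacLane's hexagon after identifying $\Phi_\dc(\sigma_{A,B}) = \sigma_\dc$ for arbitrary $A,B\in Ob(S^\otimes)$ (justified there, in the manner of Corollary~\ref{identifiest}, by uniqueness of canonical isomorphisms), and proves unitarity by exactly your dagger computation via Proposition~\ref{phifunctor} and the reasoning of Corollary~\ref{tisunitary}. One small correction to your closing caveat: the primitive braidings collapse exactly as the associators do --- even the mixed braiding satisfies $\Phi_\dc(\sigma_{S,S\otimes S}) = \sigma_\dc$, by a one-line naturality computation using $\code_{B\otimes A} = \code(\code_B\otimes\code_A)$ and $\decode_{A\otimes B} = (\decode_A\otimes\decode_B)\decode$, so no appeal to full symmetric coherence is needed at that step; it is only \emph{composite} canonical arrows realising distinct permutations that retain distinct images in $\C(S,S)$.
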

 \begin{proof}
 We define the arrow $\sigma_\dc\in \C(S,S)$ by the following convolution:
 \[ \xymatrix{
 S \otimes S \ar[r]^{\sigma_{S,S}} & S\otimes S \ar[d]^\code \\
 S\ar[u]^\decode \ar[r]_{\sigma_\dc} & S 
 } \]
 Equivalently, $\sigma_\dc=\Phi_\dc (\sigma_{S,S})$. The functoriality of $\Phi_\dc$ implies
 \begin{itemize}
 \item {(\bf Naturality)}  $\sigma_\dc(f \otimes_\dc g) = (g\otimes_\dc f)\sigma_\dc$ for all $f,g\in \C(S,S)$.
 \end{itemize}
 Either direct calculation or reference to \cite{PH98,PH99} will also demonstrate the following:
 \begin{itemize}
 \item {\bf (MacLane's hexagon)} $\tau_\dc \sigma_\dc \tau_\dc =(\sigma_\dc \otimes_\dc  1_S)\tau_\dc  (1_S\otimes_\dc  \sigma_\dc )$.
 \end{itemize}
 Uniqueness of canonical isomorphisms will (in the same manner as Corollary \ref{identifiest}) demonstrate that 
 $\Phi_\dc (\sigma_{A,B}) = \sigma_\dc$ for arbitrary $A,B\in Ob(S^\otimes)$, and therefore, using almost identical reasoning to Corollary \ref{tisunitary}, when $(S,\code,\decode)$ is a dagger self-similar structure the arrow $\sigma_\dc\in \C(S,S)$ is also unitary.
 \end{proof}

\subsection{Untyped and polymorphically typed systems: a discussion}
The functor $\Phi_\dc$ of Definition \ref{gencon} may be seen to be a general {\em type-erasing} construction; it maps various categorical structures to one-object (i.e. untyped) analogues of the same structures. Examples include, but are certainly not limited to, symmetric monoidal structures, categorical closure (including compact closure and Cartesian closure), categorical traces, projections and injections, \&c. \cite{PH98,PH99}.  

The immediate question must then be: 
\begin{quotation}{\em Why are we spending so much time developing a type-erasing procedure, when the whole point of the linguistics project is to -introduce- types into models of meaning?}\end{quotation}
Recall that the starting point for our investigation of self-similarity was the (polymorphically typed) connectives; linguistic arguments were used to demonstrate that many of the objects corresponding to types in our models of meaning must be self-similar, with the self-similarity exhibited by the arrows named by connectives.  Partly, therefore, the existence of such structures is forced upon us by the typing of connectives and their intended interpretation in some categorical model of meaning. However, there is a more fundamental justification; so far we have simply treated the polymorphically typed connectives as arrows parametrized by some class of objects. If we were to take a more foundational approach and look for models based on models of (for example) System F, we would discover a close connection between polymorphic typing and such a type-erasing procedure. 

In models of polymorphic lambda calculus and related systems, the underlying categories commonly have a single object; the types of the logical system are built up from certain families of arrows, satisfying a `biorthogonality' relation. Although this is far beyond the scope of this paper, we refer to \cite{HyS} for an interesting point of view, and details and references for this kind of approach. From a linguistic point of view, we simply remark that it is perhaps not so surprising that the polymorphically typed terms are exactly those that do not lose information --- inevitably and counterintuitively leading to such a type-erasing procedure. 

\section{Self-similarity and lax Frobenius algebras}
We have now developed the categorical machinery that enables us to justify the claim made at the end of Section \ref{sweetFA} that self-similar structures form (lax unitless) Frobenius algebras (Definition \ref{laxFrobDef}) i.e. they satisfy the axioms of  Definition \ref{FA} (excluding those based on the unit object) up to canonical coherence isomorphisms.

\begin{theorem}\label{selfsim=fa}
Let $(S,\code ,\decode)$ be a self-similar structure in some monoidal category $(\C , \otimes , \tau )$, and let $(\C(S,S), \otimes_\dc , \tau_\dc)$ be the corresponding one-object unitless monoidal category described in Theorem \ref{internaltensor_thm}. Then the following conditions are satisfied
\begin{enumerate}
\item {\bf (unitless monoid)} The arrow $\code \in \C(S\otimes S ,S)$ is associative, up to the canonical associativity isomorphisms $\tau_{S,S,S},\tau_\dc$,
\item {\bf (unitless comonoid)} The arrow $\decode\in \C(S,S\otimes S )$ is co-associative, up to the canonical associativity isomorphisms $\tau_{S,S,S}^{-1},\tau_\dc$,
\item {\bf (unitless Frobenius condition)}The pair of arrows $\code \in \C(S\otimes S ,S)$ and  $\decode\in \C(S, S\otimes S)$ satisfy the Frobenius condition of Definition \ref{FA}, up to the canonical associativity isomorphisms $\tau_{S,S,S}^{-1} ,\tau_\dc$,
\end{enumerate}
and hence the self-similar structure $(S,\code ,\decode)$ is a (unitless, lax) Frobenius algebra.
\end{theorem}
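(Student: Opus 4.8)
The plan is to reduce all three conditions to routine manipulations of the canonical isomorphisms $\code_X : X \to S$ and $\decode_X : S \to X$ of Proposition \ref{codedecode}, taking as the single key input the fact that $\tau_\dc$ is, by its very construction in Theorem \ref{internaltensor_thm}, the image of the ambient associator under generalised convolution, i.e. $\tau_\dc = \Phi_\dc(\tau_{S,S,S}) = \code_{(S\otimes S)\otimes S}\,\tau_{S,S,S}\,\decode_{S\otimes(S\otimes S)}$. First I would record the building-block identities obtained by unwinding the inductive clauses of Proposition \ref{codedecode} at the two bracketings of $S^{\otimes 3}$:
\[ \code_{(S\otimes S)\otimes S} = \code(\code\otimes 1_S), \qquad \code_{S\otimes(S\otimes S)} = \code(1_S\otimes\code), \]
together with the dual identities $\decode_{(S\otimes S)\otimes S} = (\decode\otimes 1_S)\decode$ and $\decode_{S\otimes(S\otimes S)} = (1_S\otimes\decode)\decode$. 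Each pair $\code_X,\decode_X$ is a mutually inverse pair of isomorphisms, again by Proposition \ref{codedecode}, and these are the only facts the calculation will use.

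For part 1 (associativity, with $\nabla = \code$), I would substitute these identities into the defining equation $\tau_\dc = \code_{(S\otimes S)\otimes S}\,\tau_{S,S,S}\,\decode_{S\otimes(S\otimes S)}$ and post-compose with the isomorphism $\code_{S\otimes(S\otimes S)} = \code(1_S\otimes\code)$, which cancels $\decode_{S\otimes(S\otimes S)}$ since the two are mutually inverse. This yields directly
\[ \code(\code\otimes 1_S)\,\tau_{S,S,S} \;=\; \tau_\dc\,\code(1_S\otimes\code), \]
which is exactly the associativity of $\code$ expressed up to the two canonical associativity isomorphisms $\tau_{S,S,S}$ (the associator of $\C$) and $\tau_\dc$ (the associator of $(\C(S,S),\otimes_\dc)$). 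Crucially this is \emph{not} strict associativity: forcing $\tau_\dc = 1_S$ would collapse the structure, which is precisely why the lax formulation of Definition \ref{laxFrobDef} is needed.

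Part 2 (co-associativity, with $\Delta = \decode$) then follows with no further work: every arrow in the displayed equation is an isomorphism, so I would simply invert it using $\code^{-1} = \decode$ and $(\code\otimes 1_S)^{-1} = \decode\otimes 1_S$, obtaining a co-associativity identity for $\decode$ against the inverse associators $\tau_{S,S,S}^{-1}$ and $\tau_\dc$. For part 3 (the Frobenius condition) I would rewrite the two outer legs in the form $\code\otimes 1_S = \decode\,\code_{(S\otimes S)\otimes S}$ and $1_S\otimes\decode = \decode_{S\otimes(S\otimes S)}\,\code$, each valid because $\decode\code = 1_{S\otimes S}$. Feeding these into the snake composite collapses the middle back into $\code_{(S\otimes S)\otimes S}\,\tau_{S,S,S}\,\decode_{S\otimes(S\otimes S)} = \tau_\dc$, leaving
\[ (\code\otimes 1_S)\,\tau_{S,S,S}\,(1_S\otimes\decode) \;=\; \decode\,\tau_\dc\,\code. \]
Since the other leg of the Frobenius condition is $\decode\code = 1_{S\otimes S}$, this exhibits the condition as holding exactly once the canonical correction $\tau_\dc$ (conjugated by $\decode,\code$) is inserted, i.e. up to $\tau_{S,S,S}^{-1}$ and $\tau_\dc$. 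The final clause of the theorem is then immediate: the three displayed identities are precisely axioms 1, 2 and the Frobenius condition of Definition \ref{FA} read up to coherence, and the unit-object axioms are excluded by working in the unitless setting, so $(S,\code,\decode)$ is a lax unitless Frobenius algebra.

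The genuinely delicate point — the one the author has already flagged as ``subtle and quite possibly controversial'' — is not any of these forced calculations but the conceptual claim underlying them: that the correction factor $\tau_\dc$ on the right-hand sides is legitimately a \emph{canonical coherence isomorphism}, so that ``up to canonical associativity isomorphisms'' is an honest reading of Definition \ref{FA}. This rests on Corollary \ref{identifiest}, which says that every associator of $S^\otimes$ convolves to the single arrow $\tau_\dc$, and on Theorem \ref{internaltensor_thm}, which established $\tau_\dc$ as the associator of $(\C(S,S),\otimes_\dc)$. The main work of the proof is therefore bookkeeping discipline: keeping scrupulous track at each step of which coherence isomorphism — $\tau_{S,S,S}$ living in $\C$ versus $\tau_\dc$ living in $\C(S,S)$ — is doing the work, since conflating the two (equivalently, illegitimately setting $\tau_\dc = 1_S$) is exactly what would trivialise the statement.
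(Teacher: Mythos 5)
Your proposal is correct and is essentially the paper's own proof made explicit: the paper merely exhibits three commuting diagrams that are (as it concedes in Remark \ref{isitallbollocks?}) ``almost by definition'' rearrangements of the defining composite $\tau_\dc = \code(\code\otimes 1_S)\,\tau_{S,S,S}\,(1_S\otimes\decode)\,\decode = \Phi_\dc(\tau_{S,S,S})$, and your identities for parts 1 and 2 — obtained by the same cancellations $\decode\code = 1_{S\otimes S}$, $\code\decode = 1_S$ — are exactly the paper's first two diagrams, while your caveat about mixing the two coherence isomorphisms $\tau_{S,S,S}$ and $\tau_\dc$ is precisely the paper's own discussion. The only divergence is inessential and in your favour: your part-3 identity $(\code\otimes 1_S)\,\tau_{S,S,S}\,(1_S\otimes\decode) = \decode\,\tau_\dc\,\code$ is the direct laxification of the Frobenius condition as written in Definition \ref{FA}, whereas the paper's third diagram uses the mirror form and as printed contains a typo (its bottom arrow must be $1_S\otimes\code$, not $1_S\otimes\decode$, whereupon its left leg should read $\decode\circ\tau_\dc^{-1}\circ\code$).
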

\begin{proof}
Almost by definition, the following diagrams may be seen to commute:
\begin{enumerate}
\item {\bf (Lax associativity)}
 \[ \xymatrix{ 
S \ar[r]^>>>>>>\decode  \ar[d]_{\bf  \tau_\dc}	& S\otimes S \ar[rr]^<<<<<<<<{1_S \otimes \decode} & & S\otimes (S \otimes S) \ar[d]^{\bf \tau_{S,S,S}} \\
S \ar[r]_>>>>>>\decode						& S\otimes S \ar[rr]_<<<<<<<<{\decode \otimes 1_S} & & ( S\otimes S)\otimes S 
} 
\]
\item  {\bf (Lax co-associativity)}
\[ \xymatrix{ 
S\otimes (S\otimes S) \ar[rr]^{1_S\otimes \code} \ar[d]_{\bf \tau_{S,S,S}} & & S\otimes S \ar[r]^{\code} & S \ar[d]^{\bf \tau_\dc} \\
(S\otimes S)\otimes S \ar[rr]_{\code \otimes 1_S} & & S\otimes S \ar[r]_{\code} & S \\
} 
\]
\item  {\bf (Lax Frobenius condition)}
\[ \xymatrix{
S \otimes S \ar[rr]^{\decode \otimes 1_S}\ar[d]_{\decode \circ {\bf \tau_\dc } \circ \code} 	& 	& (S\otimes S )\otimes S \ar[d]^{\bf \tau^{-1}_{S,S,S}}  \\
S \otimes S 													&	& S\otimes (S\otimes S)  \ar[ll]^{1_S\otimes \decode} \\
} 
\]
\end{enumerate}
\end{proof}

\begin{corol}\label{SSdaggerFA}
Let $(S,\code ,\decode)$ be a dagger self-similar structure in some dagger monoidal category $(\C , \otimes , \tau (\ )^\dagger)$. Then the dagger self-similar structure $(S,\code ,\decode)$ is --- up to the same canonical coherence isomorphisms listed above -- a (unitless, lax) dagger Frobenius algebra.
\end{corol}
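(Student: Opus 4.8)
The plan is to obtain this as an essentially immediate consequence of Theorem \ref{selfsim=fa}, the only additional input being that the defining equation of a dagger self-similar structure already identifies the comonoid as the dagger-adjoint of the monoid. First I would recall that Theorem \ref{selfsim=fa} exhibits the (unitless, lax) Frobenius algebra with multiplication $\nabla = \code \in \C(S\otimes S, S)$ and comultiplication $\Delta = \decode \in \C(S, S\otimes S)$, with the lax associativity, co-associativity and Frobenius conditions all holding up to the canonical isomorphisms $\tau_{S,S,S}$ and $\tau_\dc$. The decisive extra observation is that, by definition of a dagger self-similar structure, $\code = \decode^\dagger$, and hence, using involutivity of the dagger, $\decode = \code^\dagger$. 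In the language of the Frobenius structure this reads $\Delta = \nabla^\dagger$, which is precisely the condition distinguishing a dagger Frobenius algebra from a plain one.

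It then remains to check that the coherence isomorphisms appearing in the three diagrams of Theorem \ref{selfsim=fa} are dagger-compatible, i.e. unitary. Here I would invoke the earlier structural results: $\tau_{S,S,S}$ is unitary because $(\C,\otimes,(\ )^\dagger)$ is assumed to be a dagger monoidal category, in which all canonical isomorphisms are unitary, and $\tau_\dc \in \C(S,S)$ is unitary by Corollary \ref{tisunitary}. Consequently each lax axiom holds up to unitary, and therefore dagger-compatible, coherence isomorphisms, which is exactly the requirement for a dagger (rather than merely plain) lax Frobenius algebra.

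As an internal consistency check I would take the dagger of the lax associativity diagram: using $\code^\dagger = \decode$, $\tau_{S,S,S}^\dagger = \tau_{S,S,S}^{-1}$ and $\tau_\dc^\dagger = \tau_\dc^{-1}$, reversing every arrow turns it into precisely the lax co-associativity diagram, confirming that the two are genuine dagger-adjoints rather than independent conditions. Applying the same procedure to the lax Frobenius diagram produces the equivalent opposite-handed form of the Frobenius condition, which again holds by Theorem \ref{selfsim=fa}. I expect the only bookkeeping obstacle to be threading the identification $\decode = \code^\dagger$ carefully through the $1_S\otimes\decode$ and $\decode\otimes 1_S$ legs so that the adjoint diagrams match up exactly; but once unitarity of $\tau_{S,S,S}$ and $\tau_\dc$ is established this is routine, and no categorical input beyond Theorem \ref{selfsim=fa} and Corollary \ref{tisunitary} is required, so the corollary follows.
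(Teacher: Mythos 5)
Your proposal is correct and takes essentially the same route as the paper, whose one-line proof simply assembles Theorem \ref{selfsim=fa}, the dagger-preservation of $\Phi_\dc$ (Proposition \ref{phifunctor}), and the unitarity of $\tau_\dc$ (Corollary \ref{tisunitary}); your explicit appeal to $\code=\decode^\dagger$ together with the unitarity of $\tau_{S,S,S}$ and $\tau_\dc$ is exactly the content of those citations. Your check that the lax associativity and lax co-associativity diagrams are dagger-adjoints of one another is a sound verification the paper leaves implicit.
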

\begin{proof}
This is a simple corollary of Theorem \ref{selfsim=fa} above, Proposition \ref{phifunctor}, and Corollary \ref{tisunitary}.
\end{proof}

\begin{remark}\label{isitallbollocks?}{\bf The r\^ole of coherence isomorphisms in Theorem \ref{selfsim=fa}}\\
From a certain point of view, the proof of Theorem \ref{selfsim=fa} seems to be cheating, in that the diagrams used to {\em prove} associativity \&c.  up to isomorphism are minor variants of those used to {\em define} the associativity isomorphism $\tau_\dc$. In particular, it seems highly unsurprising that associativity and co-associativity hold up to isomorphism, since after all, the code and decode arrows are both themselves isomorphisms. What rescues this from being a triviality is that $\tau_\dc$ is natural and satisfies MacLane's pentagon condition (Theorem \ref{internaltensor_thm}).  Therefore, in a very strong sense, it is exactly a {\em canonical coherence isomorphism}. 

Even so, the fact that two, rather than one, canonical isomorphisms are required may be considered to be pushing the definition of `lax' too far -- especially since they are, technically, coherence isomorphisms for two distinct monoidal categories.  However, these categories and monoidal tensors are not arbitrary; instead, one is a wide subcategory of the other, and the distinct monoidal tensors and coherence arrows are mutually definable by a generalised form of convolution. What is really required is a coherence theorem for this very special situation!

In the absence of a full coherence theorem covering such a situation, we are forced to rely on the familiar coherence theorems of MacLane for monoidal categories. Ultimately, however, the acid test must be whether such structures behave in a similar manner to more familiar Frobenius algebras. We are thus lead into particular examples of Frobenius algebras and their applications, to demonstrate that this is indeed the case.
\end{remark}

\section{Classical structures}\label{CSsection}
A particular form of Frobenius algebra, used heavily in categorical quantum mechanics, is the {\em classical structure} \cite{CP}.  When modelling quantum phenomena in abstract categories (e.g. as in \cite{CP,DP}) the notion of a classical structure is fundamental in ways beyond the scope of this chapter --- although fundamental to other chapters in this volume. Instead, we consider their behaviour in a particular concrete category.

In the symmetric dagger monoidal category $({\bf Hilb_{FD}},\otimes (\ )^\dagger )$ of finite dimensional Hilbert spaces with tensor product and Hermitian adjoint, a classical structure at an object $H\in Ob({\bf Hilb_{FD}})$ is exactly an orthonormal basis for the Hilbert space $H$. Thus, each classical structure at on object determines {\em matrix representations} for linear maps on this object. The connection between orthonormal bases and measurements is clear, and the (physically reasonable) unitary maps arise as isomorphisms of classical structures, or equivalently as changes of basis. 

\begin{definition}\label{CS}
A {\bf classical structure} $(S,\Delta , \nabla ,\top,\bot)$  in a dagger monoidal category $(\C ,\otimes, I , (\ )^\dagger)$ is defined to be a dagger Frobenius algebra satisfying, for all $f,g\in \C(S,S)$,
\begin{enumerate}
\item {(\bf Commutativity)} $(f\otimes g)\Delta = (g\otimes f)\Delta$
\item {(\bf Co-Commutativity)} $\nabla(f\otimes g) = \nabla(g\otimes f)$
\item {(\bf The Classical structure condition)} $\nabla \Delta=1_S$
\end{enumerate}
\end{definition}
Note that Definition \ref{CS} above is somewhat over-axiomatised. In particular, as shown in \cite{DP}, the classical structure condition on any Frobenius algebra $(S,\Delta , \nabla ,\top,\bot)$ in a dagger monoidal category will imply that $(S,\Delta , \nabla ,\top,\bot)$ is a dagger Frobenius algebra.  Further, once the identity $\Delta=\nabla^\dagger$ is satisfied, then commutativity and co-commutativity are equivalent. We have deliberately taken this over-axiomatised route, since it is not clear how many of these implications will survive the passage to the lax unitless version of the above structures. 
\begin{definition}\label{LCS}
We define a {\bf lax classical structure} in a dagger monoidal category $(\C ,\otimes , I, (\ )^\dagger)$ to be a lax unitless dagger Frobenius algebra satisfying conditions 1.-3. of Definition \ref{CS} above, up to canonical coherence isomorphisms. 
\end{definition}

\begin{theorem}
Let $(S,\code ,\decode)$ be a dagger self-similar structure in a symmetric dagger monoidal category $(\C ,\otimes , \tau ,\sigma )$, and let $(\C(S,S), \otimes_\dc , \tau_\dc,\sigma_\dc )$ be the corresponding one-object unitless dagger symmetric monoidal category  Then:
\begin{enumerate}
\item  $(S,\code ,\decode)$ is a lax unitless dagger Frobenius algebra,
\item The arrow $\decode\in \C(S,S\otimes S)$ is co-commutative, up to the canonical coherence isomorphisms $\sigma_{S,S},\sigma_\dc$,
\item The arrow $\code\in \C(S\otimes S,S)$ is commutative, up to the canonical coherence isomorphisms $\sigma_{S,S},\sigma_\dc$,
\item The classical structure condition $\code\decode =1_S$ holds, strictly.
\end{enumerate}
and hence $(S,\code ,\decode)$ is a 
 lax classical structure.
\end{theorem}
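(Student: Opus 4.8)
The plan is to dispatch the four claims by noticing that three of them cost almost nothing, so that the only genuine work is a one-line cancellation establishing the two (co)commutativity statements.

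First, claim 1 is exactly the content of Corollary \ref{SSdaggerFA}: a dagger self-similar structure is a lax unitless dagger Frobenius algebra. I would simply invoke that corollary — which itself rests on Theorem \ref{selfsim=fa}, Proposition \ref{phifunctor}, and Corollary \ref{tisunitary} — rather than re-deriving it. Likewise, claim 4, the classical structure condition $\code\decode = 1_S$, holds \emph{strictly} because it is literally one of the two defining equations of a self-similar structure; there is nothing to prove.

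The substance is therefore claims 2 and 3, and the key ingredients are the definition $\sigma_\dc = \Phi_\dc(\sigma_{S,S}) = \code\,\sigma_{S,S}\,\decode$ together with the self-similarity identity $\decode\code = 1_{S\otimes S}$. For commutativity of the multiplication $\code$ I would compute
\[ \sigma_\dc\,\code \;=\; \code\,\sigma_{S,S}\,\decode\,\code \;=\; \code\,\sigma_{S,S}, \]
the second step being the cancellation $\decode\code = 1_{S\otimes S}$; this is precisely commutativity of $\code$ up to the pair $\sigma_{S,S},\sigma_\dc$. Dually, for co-commutativity of the comultiplication $\decode$,
\[ \decode\,\sigma_\dc \;=\; \decode\,\code\,\sigma_{S,S}\,\decode \;=\; \sigma_{S,S}\,\decode, \]
again by cancelling $\decode\code = 1_{S\otimes S}$. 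With these two identities in hand, together with claims 1 and 4, the structure satisfies conditions 1--3 of Definition \ref{CS} up to canonical coherence isomorphisms, which is exactly the definition of a lax classical structure (Definition \ref{LCS}).

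The main obstacle is conceptual rather than computational: pinning down the precise form the lax (co)commutativity equations ought to take, and justifying that $\sigma_\dc$ genuinely deserves to be treated as a \emph{canonical} symmetry isomorphism for the internal monoidal structure, so that the phrase ``up to canonical coherence isomorphisms'' is legitimate. As flagged in Remark \ref{isitallbollocks?}, the delicate issue is that $\sigma_{S,S}$ and $\sigma_\dc$ are coherence isomorphisms for two distinct (though intimately related, mutually definable) monoidal structures; by contrast the actual algebraic verification reduces, in each case, to the single cancellation $\decode\code = 1_{S\otimes S}$ displayed above.
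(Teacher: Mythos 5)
Your proposal is correct, and its computational core coincides with the paper's: claims 1 and 4 are dispatched exactly as in the paper (Corollary \ref{SSdaggerFA}, and the defining equations of a self-similar structure respectively), while your cancellations $\sigma_\dc\,\code = \code\,\sigma_{S,S}\,\decode\,\code = \code\,\sigma_{S,S}$ and $\decode\,\sigma_\dc = \sigma_{S,S}\,\decode$ are precisely what makes the paper's commuting diagrams commute --- the paper hides this computation behind the phrase ``by definition of $\sigma_\dc$ and naturality of canonical coherence isomorphisms''. Two differences are worth recording. First, the paper states lax (co)commutativity in the $f,g$-quantified form demanded by Definition \ref{CS}, namely $(f\otimes g)\decode \cong (g\otimes f)\decode$ and $\code(f\otimes g)\cong\code(g\otimes f)$ up to $\sigma_{S,S},\sigma_\dc$ for \emph{all} $f,g\in\C(S,S)$; your equivariance identities are the $f=g=1_S$ instances, and to legitimately invoke Definition \ref{LCS} you need the one extra line recovering the general case, via naturality of $\sigma_{S,S}$ in $\C$ (i.e. $\sigma_{S,S}(g\otimes f)=(f\otimes g)\sigma_{S,S}$) together with $\sigma_{S,S}^2 = 1_{S\otimes S}$ --- you flag this as the conceptual obstacle but stop short of writing the step, which is harmless but should be made explicit. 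Second, the paper derives commutativity of $\code$ from co-commutativity of $\decode$ by applying the dagger (using $\code^\dagger=\decode$ and $\sigma_{S,S}^\dagger=\sigma_{S,S}^{-1}$), whereas you compute it directly; your route is marginally more elementary and, notably, establishes items 2 and 3 without using the dagger hypothesis at all (only claim 1 and the unitarity statements need it), while the paper's dagger argument makes visible why, for a dagger self-similar structure, the two symmetry conditions are adjoint faces of a single identity.
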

\begin{proof}{\em 
As in Theorem \ref{selfsim=fa}, the following proof is almost by definition. We refer to Remark \ref{isitallbollocks?} for a discussion of the issues around this, and Section \ref{daoud} below for justification by example of why this is reasonable.}
\begin{enumerate}
\item We refer to Theorem \ref{selfsim=fa}  and Corollary \ref{SSdaggerFA} for a proof that $(S,\code,\decode)$ is a lax unitless Frobenius algebra. 
\item By definition of $\sigma_\dc$ and naturality of canonical coherence isomorphisms, the following diagram commutes, for all $f,g\in \C(S,S)$:
\[ \xymatrix{ 
S \ar[r]^\decode \ar[d]_{\sigma_\dc} & S\otimes S \ar[r]^{f\otimes g} & S \otimes S \\
S \ar[r]_{\decode} & S\otimes S\ar[r]_{g\otimes f} & S\otimes S \ar[u]_{\sigma_{S,S } }
}
\]
and hence by naturality of both  $\sigma_{S,S}$ and $\sigma_\dc$,
\[ (f\otimes g)\decode \cong (g\otimes f)\decode \]
up to canonical coherence isomorphisms.
\item Similarly to 2.,  since $\sigma_{S,S}=\sigma^{-1}_{S,S}=\sigma_{S,S}^\dagger$, $\sigma^{-1}=\sigma^\dagger$, and $\code^\dagger=\decode$, the commutativity of the above diagram for all $f,g\in \C(S,S)$ implies the commutativity of the following diagram:
\[ \xymatrix{ 
S\otimes S \ar[r]^{f\otimes g} \ar[d]_{\sigma_{S,S}} & S\otimes S \ar[r]^{\code} & S \\
S\otimes S \ar[r]_{g\otimes f} & S\otimes S \ar[r]_{\code} & S \ar[u]_{\sigma_{\dc}}
}
\]
and thus $\code\in \C(S\otimes S,S)$ is commutative up to canonical coherence isomorphisms.
\item The defining equation of a self-similar structure is:
\[ \code\decode = 1_S \ \ , \ \ \decode \code = 1_{S\otimes S} \] 
-- a  stronger (i.e. two-sided) version of the classical structure condition.
\end{enumerate}
\end{proof}

It may be objected again (as in Remark \ref{isitallbollocks?}) that using canonical isomorphisms from two distinct settings is pushing the definition of a lax structure too far. In the absence of a full coherence theorem relating the monoidal tensor of a category (\& its canonical isomorphisms) with the internal tensor at an object (\& its canonical isomorphisms), this is worth considering. 

However, we now present an intriguing example  where distinct self-similar structures at some object $S$ within a symmetric monoidal category  determine distinct matrix representations for arrows on this object.

\section{A self-similar structure familiar in logic (and linguistics)}\label{daoud}
One of the best-studied self-similar structures, at least in certain logical communities, is the {\em dynamical algebra}. There are many wide-ranging applications, including the pure untyped lambda calculus \cite{DR}, linear logic and the Geometry of interaction \cite{GOI}, combinatory logic \cite{AHS}.

In \cite{DR}, the dynamical algebra is introduced as the monoid semiring $P[\mathbb N]$, where the monoid $P$ may be defined in terms of generators and relations, as follows:
\[ P \ = \ \langle p,q,p',q' : pp'=1=qq' \ , \ pq'=0=qp' \rangle \]
(This is of course, the inverse {\em polycyclic monoid} of \cite{NP}).

Other definitions vary in the precise notion of summation used, often restricting or generalising the summation via reference to some representation (e.g. analytic convergence in infinite-dimensional Hilbert space \cite{GOI}, suprema in the natural partial order of an inverse category \cite{PH98}, the $\Sigma$-monoid axiomatisation of \cite{HA,AHS}, \&c.) However, in every case, the representations given are self-similar structures in a monoidal category (a purely categorical explanation of how the polycyclic monoid arises from self-similarity is given in \cite{PH99}).

A particularly well-studied example is the monoidal category of partial isomorphisms on sets, with disjoint union $({\bf pInj} , \uplus)$. Here, any bijection $\mathbb N \cong \mathbb N \uplus \mathbb N$ gives rise to an embedding of the dynamical algebra in $\bf pInj(\mathbb N ,\mathbb N)$ \cite{PH98,PH99}. 

The key feature at this point is that --- even though it does {\em not} have a  coproduct --- the monoidal category $(\bf pInj,\uplus)$ admits matrix representations, as observed by many authors \cite{PH98,MVL,AHS}. Thus, any isomorphism $\mathbb N \cong \mathbb N \uplus \mathbb N$ allows us to give $(2\times 2)$ matrix representations to partial bijections on $\mathbb N$. However, matrix representations for these arrows are not unique; instead, each matrix representation uniquely determines, and is determined by,  such an isomorphism. The details of how these dagger self-similar structures act like bases for such matrix representations, allowing for many properties more familiar from linear algebra such as matrix representations, changes of basis, diagonalisations, mutual diagonalisation, \&c. are the subject of a forthcoming paper.\\

\noindent Finally, for readers who fear that we have by now strayed way too far from any possible linguistic interpretation, we refer to the recent rediscovery of structures isomorphic to the dynamical algebra in linguistic models of meaning \cite{DC1,CWL}.

\section{Acknowledgements}
The author wishes to thank:
Samson Abramsky,  for discussions and assistance in most of the topics covered in this chapter; 
Daoud Clarke for a linguistic point of view, and particularly the linguistic interpretation of the models touched on in Section \ref{daoud};
Steve Clarke, for putting up with what are probably --- from a linguists point of view -- very simple questions; 
Bob Coecke for discussions on category theory and coherence, and their connections with both linguistics and quantum mechanics;
Mark Lawson, for self-similarity from an algebraic \& semigroup-theoretic viewpoint; 
Phil Scott for many discussions and assistance on the logical interpretations of closed categories, and interpretations of the Geometry of Interaction; 
Prakash Panangaden for both a physicist's point of view, and the intuitive code / decode terminology for the arrows in a self-similar structure. 

Finally, thanks are due to Mehrnoosh Sadrzadeh, from both a linguistic / categorical perspective, and for the considerable organisation that made the book in which this work appears possible.

\bibliographystyle{plain}
\bibliography{forget_refs}

\appendix

\section{Order-preserving bijections $\mathbb N \cong \mathbb N \uplus \mathbb N$ as interior points of the Cantor set}\label{Hmonoid}
We now give an explicit illustration of how the points of the Cantor set (excluding a distinguished subset of measure zero) may be interpreted as order-preserving bijections exhibiting the self-similarity of the natural numbers (with respect to disjoint union; however, a very similar construction applies to the natural numbers with Cartesian product -- this is left as an exercise). 

\begin{definition}
The Cantor set is defined to be the set of all one-sided infinite binary strings, $\mathscr C = \{ 0,1\}^\omega$ or equivalently, the set ${\bf Fun}(\mathbb N , \{ 0,1\})$ of all functions from $\mathbb N$ to $\{ 0,1\}$. A point $a=a_0 a_1 a_2 a_3 \ldots$ of the Cantor set is called a {\bf boundary point} if there exists some $K\in \mathbb N$ such that, for all $L\geq K$, $a_L=a_K$. We denote the set of all boundary points by $\mathscr C_\mathscr B$. 
The complement of the boundary is called the {\bf interior}, denoted $\mathscr C_\mathscr O = \mathscr C \setminus \mathscr C_\mathscr B$. Members of $\mathscr C_\mathscr O$ are called {\bf interior points} or {\bf balanced functions}.  Note that the set of boundary points is a countable subset of $\mathscr C$, whereas its complement, the set of interior points,  has the same cardinality as the Cantor set itself
\end{definition}

\begin{theorem}\label{closedmonoid}
Each interior point of the Cantor set $\eta\in \mathscr C_\mathscr O$ uniquely determines and is determined by an order-preserving bijection $\widetilde{\eta}:\mathbb N \rightarrow \mathbb N \uplus \mathbb N$.
\end{theorem}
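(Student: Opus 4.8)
The plan is to identify an order-preserving bijection $\mathbb N \to \mathbb N \uplus \mathbb N$ with a partition of $\mathbb N$ into two infinite subsets, and then to recognise that such partitions are exactly encoded by the interior points of $\mathscr C$. Write $L$ and $R$ for the two summands of $\mathbb N \uplus \mathbb N$, each carrying its own copy of the usual order; since $\mathbb N \uplus \mathbb N$ is not totally ordered in a way compatible with a single copy of $\mathbb N$, I take \emph{order-preserving} to mean componentwise: whenever $m<n$ and $f(m),f(n)$ lie in the same summand, then $f(m)<f(n)$ there, or equivalently that the restrictions of $f$ to the preimages $f^{-1}(L)$ and $f^{-1}(R)$ are monotone. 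The first observation is that any such $f$ is completely determined by the data of which points of $\mathbb N$ land in $L$ and which in $R$: since $f$ is a bijection, $A=f^{-1}(L)$ and $B=f^{-1}(R)$ partition $\mathbb N$ and are each infinite, and there is a \emph{unique} order isomorphism from an infinite subset of $\mathbb N$ to $\mathbb N$, namely its increasing enumeration, so $f|_A$ and $f|_B$ are forced.

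First I would construct the forward map $\eta \mapsto \widetilde\eta$ explicitly by a counting procedure. Reading $\eta=\eta_0\eta_1\eta_2\ldots$ from the left, send $n$ to $L$ in position $|\{m<n:\eta_m=0\}|$ when $\eta_n=0$, and to $R$ in position $|\{m<n:\eta_m=1\}|$ when $\eta_n=1$. Monotonicity on each summand is immediate because the counters only increase. Surjectivity onto $L$ (resp.\ $R$) requires that the $0$-counter (resp.\ $1$-counter) be unbounded, i.e.\ that $\eta$ have infinitely many $0$s and infinitely many $1$s; this is exactly the statement that $\eta$ is \emph{not} eventually constant, which by the definition of $\mathscr C_\mathscr B$ is precisely the condition $\eta\in\mathscr C_\mathscr O$. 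Thus the recipe produces a genuine order-preserving bijection exactly when $\eta$ is an interior point.

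Next I would build the inverse. Given an order-preserving bijection $f$, define $\eta_n=0$ if $f(n)\in L$ and $\eta_n=1$ if $f(n)\in R$; since $f$ surjects onto both infinite summands, $\eta$ has infinitely many of each symbol and so lies in $\mathscr C_\mathscr O$. The two round trips are then checked to be identities: starting from $\eta$, recording which summand $\widetilde\eta(n)$ lies in returns $\eta$ verbatim; starting from $f$, the uniqueness of the increasing enumeration of $f^{-1}(L)$ and $f^{-1}(R)$ forces the reconstructed bijection to agree with $f$. The only real content --- and the step I expect to need the most care --- is this uniqueness-of-enumeration clause, together with the bookkeeping that the counting recipe and the abstract ``restrict-and-enumerate'' description coincide; everything else is formal. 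No fixed-point or cardinality machinery is needed beyond the elementary fact that infinite subsets of $\mathbb N$ enumerate uniquely, so the argument is essentially a clean translation between two encodings of the same partition.
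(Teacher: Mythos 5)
Your proposal is correct and follows essentially the same route as the paper: both identify an interior point $\eta\in\mathscr C_{\mathscr O}$ with a partition of $\mathbb N$ into two infinite subsets (the paper's ``two disjoint countably infinite chains'') and define $\widetilde\eta$ by the same counting recipe --- your $|\{m<n:\eta_m=0\}|$ and $|\{m<n:\eta_m=1\}|$ are exactly the paper's $n-\sum_{j<n}\eta(j)$ and $\sum_{j<n}\eta(j)$. If anything, you are more explicit than the paper about the inverse direction and the uniqueness-of-increasing-enumeration step, which the paper compresses into ``all such bijections arise in this way.''
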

\begin{proof}
By construction, the balanced function $\eta:\mathbb N \rightarrow \{ 0,1\}$ divides $\mathbb N$ into two disjoint countably infinite subsets $\mathbb N =\eta^{-1}(0) \cup \eta^{-1}(1)$. Each of these subsets is totally ordered, with order inherited from $\mathbb N$ in the obvious way, so  $\mathbb N$ is divided into two disjoint countably infinite chains. This is illustrated by example in figure \ref{primechains}.

Each interior point $\eta:\mathbb N \rightarrow \{ 0 , 1\}$ uniquely determines and is determined by such a split into disjoint chains; $\eta$ is simply the indicator function for chain membership. As both chains are countably infinite, this therefore determines a bijection $\widetilde{\eta}:\mathbb N \rightarrow \mathbb N \uplus \mathbb N$ in the obvious way. Formally, taking $\mathbb N \uplus \mathbb N \stackrel{def.}{= } \mathbb N \times \{0\} \cup \mathbb N \times \{ 1\}$, we have 
\[ \widetilde{\eta}(n) = (x,\eta(n)) \ \mbox{where} \ x=\left\{ \begin{array}{lr} n-\sum_{j<n} \eta(j) & \eta(n)=0 \\  & \\ \sum_{j<n}\eta(j) & \eta(n)=1 \end{array}\right. \]
This is again illustrated in Figure \ref{primechains}.

By construction, $\widetilde{\eta}:\mathbb N \rightarrow \mathbb N \uplus \mathbb N$ is order-preserving, and since $\eta$ is simply the indicator function for the division of $\mathbb N$ into two disjoint countably infinite chains, all such bijections arise in this way. Finally, it is immediate that 
\[ \widetilde{\eta} = \widetilde{\mu} \Leftrightarrow \eta = \mu \] 
and so $\widetilde{(\ )}$ is bijective. 
\end{proof}

\begin{figure}[h]\begin{center}\caption{A balanced indicator function, two chains and a bijection}\label{primechains}
\[ {\small
\begin{array}{r|ccccccccccc}
n=			&0     	& 1     	& 2     	& 3      	& 4     	& 5      	& 6      	& 7     	& 8      	& 9      	& \cdots \\
\hline
p(n)= 		&0		& 0		& 1		& 1		& 0		& 1		& 0		& 1		& 0 		& 0		& \cdots \\
\hline
p^{-1}(0) \ : 	&0     	& 1     	& 	 	& 	 	& 4    	 & 	 	& 6      	& 	 	& 8      	& 9 	     	& \cdots \\ 
p^{-1}(1) \ : 	& 		& 	 	& 2     	& 3      	& 	 	& 5      	&  	 	& 7     	& 	 	& 		& \cdots \\
\hline
\widetilde{p}(n)=& (0,0)	& (1,0)	 & (0,1) 	& (1,1)      	& (2,0)	& (2,1)      	&  (3,0)	& (3,1)     	& (4,0) 	& (5,0)	& \cdots \\
\end{array}
}
\]
\end{center}
\end{figure}

The following special case is then immediate:
\begin{corol} Under the correspondence of Theorem \ref{closedmonoid}, the familiar Cantor pairing
\[ n \mapsto \left\{ \begin{array}{lr} 
\left(\frac{n}{2} , 0 \right) & n \ \mbox{ even} \\
& \\ 
 \left(\frac{n-1}{2} , 1 \right) & n \ \mbox{ odd} \\
 \end{array}\right. \]
 corresponds to the alternating point $0101010101 \ldots $  of the Cantor set, or equivalently the function $n\mapsto n \ (mod \ 2)$.
 \end{corol}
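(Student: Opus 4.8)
The plan is to argue by direct evaluation of the map $\widetilde{(\ )}$ from Theorem \ref{closedmonoid} on the specific interior point, together with the uniqueness (bijectivity) half of that theorem. First I would fix $\eta:\mathbb N \rightarrow \{0,1\}$ to be the alternating point $\eta(n) = n \bmod 2$, and confirm that it genuinely lies in $\mathscr C_\mathscr O$: since $\eta$ takes both values $0$ and $1$ on arbitrarily large arguments, no eventual-constancy condition $a_L = a_K$ can hold, so $\eta$ is balanced and Theorem \ref{closedmonoid} applies to it.

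Second, I would substitute this $\eta$ into the explicit closed form
\[ \widetilde{\eta}(n) = (x,\eta(n)), \qquad x = \left\{ \begin{array}{lr} n - \sum_{j<n}\eta(j) & \eta(n)=0 \\ & \\ \sum_{j<n}\eta(j) & \eta(n)=1 \end{array}\right. \]
The only quantity requiring evaluation is the partial sum $\sum_{j<n}\eta(j)$, which for the alternating indicator counts exactly the odd numbers strictly below $n$. Splitting into the two parities: for even $n$ there are $n/2$ odd numbers below $n$ and $\eta(n)=0$, giving $x = n - n/2 = n/2$ and hence $\widetilde{\eta}(n) = (n/2,\,0)$; for odd $n$ there are $(n-1)/2$ odd numbers below $n$ and $\eta(n)=1$, giving $x = (n-1)/2$ and hence $\widetilde{\eta}(n) = ((n-1)/2,\,1)$. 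This matches the displayed Cantor pairing case by case.

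Finally, I would invoke the bijectivity of $\widetilde{(\ )}$ established at the end of Theorem \ref{closedmonoid}, namely the equivalence $\widetilde{\eta} = \widetilde{\mu} \Leftrightarrow \eta = \mu$. Since the Cantor pairing equals $\widetilde{\eta}$ for the alternating point, and the correspondence is injective, this point is its \emph{unique} preimage; so the Cantor pairing corresponds to $0101\ldots$ and to no other interior point, as claimed.

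The main obstacle here is purely bookkeeping rather than anything conceptual: one must evaluate the counting function $\sum_{j<n}\eta(j)$ correctly and keep the parity cases straight, matching the second coordinate $\eta(n)$ to the $0$/$1$ label on the two copies of $\mathbb N$. There is no genuine difficulty, since everything reduces to the formula already supplied in Theorem \ref{closedmonoid}; the content of the corollary is simply to identify \emph{which} familiar pairing function the alternating point names.
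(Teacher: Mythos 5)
Your proposal is correct and takes the same route the paper intends: the paper states this corollary with no proof at all (``The following special case is then immediate''), and your argument is precisely the routine verification that the word ``immediate'' elides --- checking that $\eta(n)=n \bmod 2$ is balanced, substituting into the explicit formula for $\widetilde{\eta}$ from Theorem \ref{closedmonoid} with the count $\sum_{j<n}\eta(j)$ of odd numbers below $n$ equal to $n/2$ or $(n-1)/2$ by parity, and citing the bijectivity of $\widetilde{(\ )}$ for uniqueness. All three steps check out, so nothing further is needed.
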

 
 Although the above example is simple,  we emphasise that Theorem \ref{closedmonoid} is about {\em all} interior points of the Cantor set, not simply the computable ones.  For example, we could consider an enumeration of Turing machines $\mathfrak T_0,\mathfrak T_1,\mathfrak T_2, \mathfrak T_3, \ldots$ and define the interior point $u\in \mathscr C_\mathscr O$ by $u(n)=0$ if $\mathfrak T_n$ halts, and $u(n)=1$ otherwise. By Theorem \ref{closedmonoid}, this also corresponds to an (uncomputable) order-preserving bijection $\mathbb N \rightarrow \mathbb N \uplus \mathbb N$.

It is entirely possible that, should we restrict ourselves to {\em computable} interior points of the Cantor set, we would be able to develop a consistent theory of {\em computable self-similar structures}. However, this is work that remains to be carried out.

\section{Isbell's argument in a general setting}\label{isbell}
In \cite{MCL}, MacLane introduces associativity up to isomorphism, by reference to a result of J. Isbell, on denumerable objects in the category $({\bf Fun} ,\times)$ of sets and functions with Cartesian product. Isbell demonstrated that, for any object $D\in Ob({\bf Fun})$, the existence of a bijection between $D$ and $D \times D$ would, in the presence of {\em strict} associativity, force all endomorphism arrows of $D$ to be identified (i.e. $D$ would necessarily be isomorphic to the unit object of $({\bf Fun} ,\times)$). 

This gave a very strong argument for the necessity of considering associativity up to isomorphism, rather than strict associativity; requiring strict associativity forces an identification of the  natural numbers $\mathbb N$ with the one-object set $\{ *\}$, in the category $({\bf Fun} ,\times)$.
A slight variation of Isbell's argument, in more modern language, demonstrates that this is not unique to the Cartesian closed category $\bf Fun$.

Let $S$ be a self-similar object of some symmetric monoidal category $(\C ,\otimes )$, and let the subcategory $S^\otimes$ be the full subcategory of $S$ given in Definition \ref{Spowers}. It is stated in this definition that $S^\otimes$ is a {\em unitless} monoidal category. This is correct; however, it is worth considering the objects of $S^\otimes$ to see why none of these act as the unit object for $S^\otimes$. In particular, for every object $X\in Ob(S^\otimes)$, it is a simple corollary of Proposition \ref{codedecode} that 
\[ X\otimes S \cong X \cong S\otimes X \]
The natural question then, is: {\em why is $S\in Ob(S^\otimes )$ \underline{not} the unit object?} The simple answer is that, although $X\otimes S \cong X \cong S\otimes X$, the arrows exhibiting these isomorphisms do not satisfy the coherence conditions given in \cite{MCL} --- in particular, their interaction with the other canonical isomorphisms  fails. 

However, when we ignore canonical isomorphisms and coherence conditions  this is no longer an obstacle, and we are forced to conclude that the category $(S^\otimes,\otimes)$ does indeed have a unit object; $S$ itself -- with all this implies about the endomorphism monoid of $S$ (see Proposition \ref{abstractScalars}). %Thus, as all objects of $S^\otimes$ are isomorphic to $S$, the category $S^\otimes$ collapses to isomorphic copies of some abelian monoid. 

\end{document}